\definecolor{darkblue}{rgb}{0,0 ,0.542}
\definecolor{lightgreen}{rgb}{.9,1,.9}
\definecolor{lightred}{rgb}{1,.415,.415}
\definecolor{lightblue}{rgb}{.415,.415,1}
\newcolumntype{L}[1]{>{\raggedright\arraybackslash}p{#1}}
\newcolumntype{C}[1]{>{\centering\arraybackslash}p{#1}}
\newcolumntype{R}[1]{>{\raggedleft\arraybackslash}p{#1}}
\theoremstyle{plain} 
\newtheorem{theorem}{Theorem}
\newtheorem{lemma}{Lemma}
\newtheorem{assumption}{Assumption}
\newtheorem*{suptheorem}{Theorem}
\def\defn{\,\coloneqq\,}
\def\d{{\mathsf{\, d}}}
\def\Im{{\mathsf{Im}}}
\def\prox{{\mathsf{prox}}}
\def\max{{\mathsf{max}}}
\def\min{\mathop{\mathsf{min}}}
\def\R{\mathbb{R}}
\def\E{\mathbb{E}}
\def\ebm{{\bm{e}}}
\def\xbm{{\bm{x}}}
\def\zbm{{\bm{z}}}
\def\ybm{{\bm{y}}}
\def\zbm{{\bm{z}}}
\def\sbm{{\bm{s}}}
\def\abm{{\bm{a}}}
\def\bbm{{\bm{b}}}
\def\ubm{{\bm{u}}}
\def\vbm{{\bm{v}}}
\def\varepsilonbar{{\overline{\varepsilon}}}
\def\Abm{{\bm{A}}}
\def\Dbm{{\bm{D}}}
\def\Ibm{{\bm{I}}}
\def\Sbm{{\bm{S}}}
\def\Ibf{{\mathbf{I}}}
\def\Abm{{\bm{A}}}
\def\Dbm{{\bm{D}}}
\def\Ibm{{\bm{I}}}
\def\Hbm{{\bm{H}}}
\def\Ncal{{\mathcal{N}}}
\def\Lcal{{\mathcal{L}}}
\def\Dsf{{\mathsf{D}}}
\def\Tsf{{\mathsf{T}}}
\def\Tsf{{\mathsf{T}}}
\def\Dsf{{\mathsf{D}}}
\def\Hsf{{\mathsf{H}}}
\def\Jsf{{\mathsf{J}}}
\def\Hsf{{\mathsf{H}}}
\def\zbmbar{{\overline{\bm{z}}}}
\def\xbmhat{{\widehat{\bm{x}}}}
\def\hine{{\hat{h}}}
\def\Dhat{{\mathsf{\widehat{D}}}}
\def\argmin{\mathop{\mathsf{arg\,min}}} 
\title{Prior Mismatch and Adaptation in PnP-ADMM with a Nonconvex Convergence Analysis}
\date{}
\author{
Shirin Shoushtari\textsuperscript{*}, Jiaming Liu\textsuperscript{*}, Edward P.\ Chandler,\\ M.\ Salman Asif, and Ulugbek S.\ Kamilov\\
\small Washington University in St. Louis, MO 63130, USA\\
\small University of California Riverside, CA 92521, USA\\
\small \texttt{\{s.shirin,  jiaming.liu, e.p.chandler, kamilov\}@wustl.edu, \texttt{sasif@ucr.edu} }
}
\begin{document}

\maketitle
\let\thefootnote\relax\footnote{\textsuperscript{*}These authors contributed equally.}
\begin{abstract}
\noindent
Plug-and-Play (PnP) priors is a widely-used family of methods for solving imaging inverse problems by integrating physical measurement models with image priors specified using image denoisers.  PnP methods have been shown to achieve state-of-the-art performance when the prior is obtained using powerful deep denoisers. Despite extensive work on PnP, the topic of \emph{distribution mismatch} between the training and testing data has often been overlooked in the PnP literature. This paper presents a set of new theoretical and numerical results on the topic of prior distribution mismatch and domain adaptation for \emph{alternating direction method of multipliers (ADMM)} variant of PnP. Our theoretical result provides an explicit error bound for PnP-ADMM due to the mismatch between the desired denoiser and the one used for inference. Our analysis contributes to the work in the area by considering the mismatch under \emph{nonconvex} data-fidelity terms and \emph{expansive} denoisers. Our first set of numerical results quantifies the impact of the prior distribution mismatch on the performance of PnP-ADMM on the problem of image super-resolution. Our second set of numerical results considers a simple and effective domain adaption strategy that closes the performance gap due to the use of mismatched denoisers. Our results suggest the relative robustness of PnP-ADMM to prior distribution mismatch, while also showing that the performance gap can be significantly reduced with few training samples from the desired distribution.
\end{abstract}

\section{Introduction}
\label{sec:intro}

\emph{Imaging inverse problems} consider  the recovery of a clean image from its corrupted observation. Such problems arise across the fields of computational imaging, biomedical imaging, and computer vision. As 
imaging inverse problems are typically ill-posed, solving them requires the use of image priors. While many approaches have been proposed for implementing image priors, the current literature is primarily focused on methods based on training \emph{deep learning (DL)} models to map noisy observations to clean images~\cite{McCann.etal2017, Lucas.etal2018, Ongie.etal2020}.

\medskip\noindent
\emph{Plug-and-Play (PnP)} Priors~\cite{Venkatakrishnan.etal2013, Sreehari.etal2016} has emerged as a class of DL algorithms for solving inverse problems by denoisers as image priors. PnP has been successfully used in many applications such as super-resolution, phase retrieval, microscopy, and medical imaging~\cite{Metzler.etal2018, Zhang.etal2017a, Meinhardt.etal2017, Dong.etal2019, Zhang.etal2019, Wei.etal2020, Zhang.etal2021dpir}. The success of PnP has resulted in the development of its multiple variants (e.g., PnP-PGM, PnP-SGD, PnP-ADMM. PnP-HQS), strong interest in its theoretical analysis, as well as investigation of its connection to other methods used in inverse problems, such as score matching and denoising diffusion probabilistic models~\cite{Chan.etal2016, Romano.etal2017, Buzzard.etal2017, Teodoro.etal2019, Ahmad.etal2020, yuan2020plug, Kamilov.etal2022, Reehorst.Schniter2019, Sun.etal2019a, Sun2019b, Liu.etal2021b, kadkhodaie2021stochastic, Cohen.etal2021, Hurault.etal2022, Laumont.etal2022}.

\medskip\noindent
Despite extensive literature on PnP, the research in the area has mainly focused on the setting where the distribution of the test or inference data is perfectly matched to that of the data used for training the image denoiser. Little work exists for PnP under mismatched priors, where a distribution shift exists between the training and test data. In this paper, we investigate the problem of \emph{mismatched} priors in PnP-ADMM. We present a new theoretical analysis of PnP-ADMM that accounts for the use of mismatched priors. Unlike most existing work on PnP-ADMM, our theory is compatible with \emph{nonconvex} data-fidelity terms and \emph{expansive} denoisers~\cite{Sun.etal2021, Tang.Davies2020, gavaskar2019proof, Chan.etal2019, Ryu.etal2019}. Our analysis establishes explicit error bounds on the convergence of PnP-ADMM under a well-defined set of assumptions. We validate our theoretical findings by presenting numerical results on the influence of distribution shifts, where the denoiser trained on one dataset (e.g., BreCaHAD or CelebA) is used to recover an image from another dataset (e.g., MetFaces or RxRx1). We additionally present numerical results on a simple domain adaptation strategy for image denoisers that can effectively address data distribution shifts in PnP methods  (see Figure~\ref{fig:disvsgap} for an illustration). Our work thus enriches the current PnP literature by providing novel theoretical and empirical insights into the problem of data distribution shifts in PnP.

\medskip\noindent
All proofs and some details that have been omitted due to space constraints of the main text are included in the supplementary material.
\begin{figure}[t!]
    \centering
     \includegraphics[width=0.95\textwidth,center]{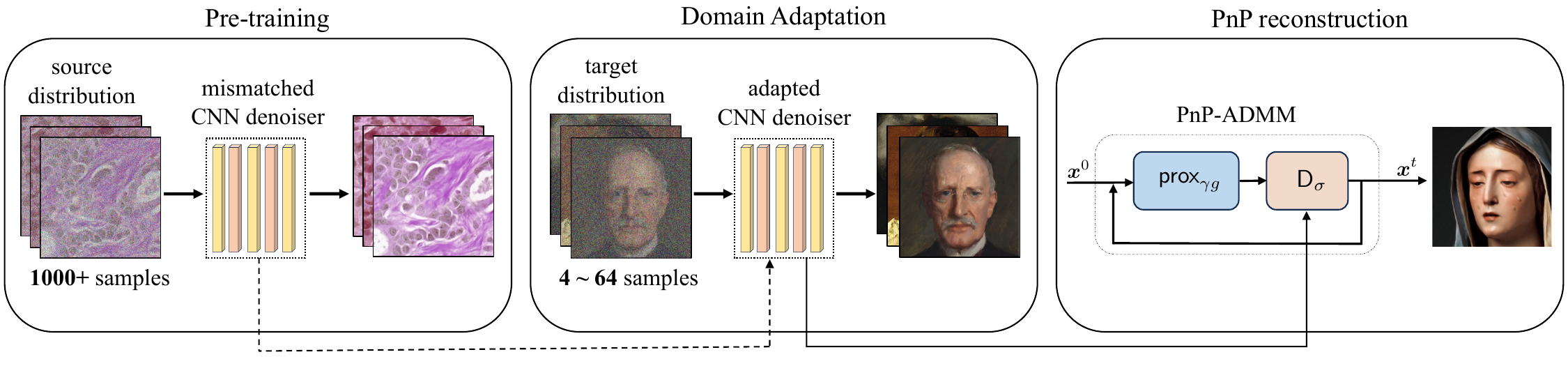}
    \caption{\emph{Illustration of domain adaptation in PnP-ADMM. The mismatched denoiser is pre-trained on source distribution (BreCaHAD) and adapted to target distribution (MetFaces) using a few samples. Adapted prior is then plugged into PnP-ADMM algorithm to reconstruct a sample from MetFaces.}
    }
    \label{fig:main}
\end{figure}

\section{Background}
\label{sec:background}
\textbf{Inverse problems.} Inverse problems involve the recovery of an unknown signal $\xbm \in \R^n$ from a set of noisy measurements $\ybm = \Abm \xbm + \ebm $, where $\Abm \in \R^{m\times n}$ is the measurement model and $\ebm$ is the noise. Inverse problems are often formulated and solved as optimization problems of form
\begin{equation}
    \label{Eq:OptimInverseProb}
    \xbmhat \in \argmin_{\xbm\in\R^n} f(\xbm) \quad\text{with}\quad f(\xbm) = g(\xbm) + h(\xbm)\ ,
\end{equation}
where $g$ is the data-fidelity term that measures the consistency with the measurements $\ybm$ and $h$ is the regularizer that incorporates prior knowledge on $\xbm$. The least-squares function $g(\xbm) = \frac{1}{2} \|\Abm \xbm - \ybm\|_2^2$ and total variation (TV) function $h(\xbm) = \tau \|\Dbm \xbm\|_1$, where $\Dbm$ denotes the  image gradient and $ \tau >0$ a regularization parameter, are commonly used functions for the data-fidelity term and the regularizer~\cite{Rudin.etal1992, Beck.Teboulle2009a}.

\medskip\noindent
\textbf{Deep Learning.} DL has gained significant attention in the context of inverse problems~\cite{McCann.etal2017, Lucas.etal2018, Ongie.etal2020}. DL methods seek to perform a regularized inversion by learning a mapping from the measurements to the target images parameterized by a deep convolutional neural network (CNN)~\cite{Wang.etal2016, Jin.etal2017, Kang.etal2017, zafari2023frequency, Chen.etal2017, Xu.etal2018}. Model-based DL (MBDL) refers to a sub-class of DL methods for inverse problems that also integrate the measurement model as part of the deep model~\cite{Ongie.etal2020, Monga.etal2021}. MBDL approaches include methods such as PnP, regularization by denoising (RED), deep unfolding (DU), and deep equilibrium models (DEQ)~\cite{zhang2018ista, Hauptmann.etal2018, Gilton.etal2021, Liu.etal2022a}.

\medskip\noindent
\textbf{Plug-and-Play Priors.} 
PnP is one of the most popular MBDL approaches for solving imaging inverse problems that uses denoisers as priors~\cite{Venkatakrishnan.etal2013} (see also recent reviews~\cite{Ahmad.etal2020, Kamilov.etal2022}). 
PnP has been extensively investigated, leading to multiple PnP variants and theoretical analyses~\cite{Chan.etal2016, Buzzard.etal2017, Sun.etal2021, Ryu.etal2019, Hurault.etal2022, Laumont.etal2022, Tirer.Giryes2019, Teodoro.etal2019, Sun.etal2019a, Cohen.etal2021}. Existing theoretical convergence analyses of PnP differ in the specifics of the assumptions required to ensure the convergence of the corresponding iterations.
For example, bounded, averaged, firmly nonexpansive, nonexpansive, residual nonexpansive, or demi-contractive denoisers have been previously considered for designing convergent PnP schemes~\cite{Chan.etal2016,gavaskar2019proof, Romano.etal2017, Ryu.etal2019, Cohen.etal2021, Sun2019b, Sun.etal2021, Terris2020, Reehorst.Schniter2019, Liu.etal2021b, hertrich2021convolutional, bohra2021learning}.
The recent work~\cite{Xu.etal2020} has used an elegant formulation of an MMSE denoiser from~\cite{Gribonval2011} to perform a nonconvex convergence analysis of PnP-PGM without any nonexpansiveness assumptions on the denoiser. Another recent line of PnP work has explored specification of the denoiser as a gradient-descent step on a functional parameterized by a deep neural network~\cite{Hurault.etal2022, hurault2022proximal, Cohen.etal2021}.

\medskip\noindent
PnP-ADMM is summarized in Algorithm~\ref{Alg:InexactPnPADMM}~\cite{Sreehari.etal2016, Venkatakrishnan.etal2013}, where $
\Dsf_\sigma$ is an additive white Gaussian denoiser (AWGN) denoiser, $\gamma >0$ is the penalty parameter, and $\sigma > 0$ controls the denoiser strength. PnP-ADMM is based on the alternating direction method of multipliers (ADMM)~\cite{Boyd.etal2011}. Its formulation relies on optimizing in an alternating fashion the augmented Lagrangian associated with the objective function in~\eqref{Eq:OptimInverseProb}
 \begin{equation}
    \label{Eq:AugLagrangian}
    \phi \left (  \xbm, \zbm, \sbm \right) = g(\xbm) + h(\zbm) + \frac{1}{\gamma} \sbm^\Tsf (\xbm - \zbm) + \frac{1}{2\gamma} \left \| \xbm - \zbm \right  \|_2^2.
\end{equation}

\medskip\noindent
The theoretical convergence of PnP-ADMM has been explored for convex functions using monotone operator theory~\cite{Ryu.etal2019, Sun.etal2021}, for nonconvex regularizer and convex data-fidelity terms~\cite{hurault2022proximal}, and for bounded denoisers~\cite{Chan.etal2016}. 

\medskip\noindent
\textbf{Distribution Shift.} 
Distribution shifts naturally arise in imaging when a DL model trained on one type of data is applied to another. The mismatched DL models due to distribution shifts lead to suboptimal performance. Consequently, there has been interest in mitigating the effect of mismatched DL models~\cite{sun2020test, Darestani.etal2021, Darestani.etal2022, Jalal.etal2021a}. In PnP methods, a mismatch arises when the denoiser is trained on a distribution different from that of the test data. The prior work on denoiser mismatch in PnP is limited~\cite{shoushtari2022deep, Reehorst.Schniter2019, Laumont.etal2022, Liu.etal2020}.

\medskip\noindent
\textbf{Our contributions.}
\emph{\textbf{(1)}} Our first contribution is a new theoretical analysis of PnP-ADMM accounting for the discrepancy between the desired and mismatched denoisers. Such analysis has not been considered in the prior work on PnP-ADMM. Our analysis is broadly applicable in the sense that it does \emph{not} assume convex data-fidelity terms and nonexpansive denoisers. \emph{\textbf{(2)}} Our second contribution is a comprehensive numerical study of distribution shifts in PnP through several well-known image datasets on the problem of image super-resolution. \emph{\textbf{(3)}} Our third contribution is the illustration of simple data adaptation for addressing the problem of distribution shifts in PnP-ADMM. We show that one can successfully close the performance gap in PnP-ADMM due to distribution shifts by adapting the denoiser to the target distribution using very few samples. 

\section{Proposed Work}
\label{sec:theoanalysis}

This section presents the convergence analysis of PnP-ADMM that accounts for the use of mismatched denoisers. It is worth noting that the theoretical analysis of PnP-ADMM has been previously discussed in~\cite{Chan.etal2019, Teodoro.etal2019, Ryu.etal2019, Sun.etal2021}. The novelty of our work can be summarized in two aspects: (1) we analyze convergence with the mismatched priors; (2) our theory accommodates nonconvex $g$ and expansive denoisers. 

\subsection{PnP-ADMM with Mismatched Denoiser}\label{subsec:algorithm}

\begin{algorithm}[t]
\caption{PnP-ADMM}
\label{Alg:InexactPnPADMM}
\begin{algorithmic}[1]
\State \textbf{input: } $\zbm^0, \sbm^0 \in \R^n$, parameters $\sigma, \gamma>0$.
\For{$k = 1, 2, 3, \cdots$}
\State $\xbm^k \leftarrow \prox_{\gamma g} (\zbm^{k-1} - \sbm^{k-1}) $
\State $\zbm^k\leftarrow \Dsf_{\sigma} \left (\xbm^{k} + \sbm^{k-1} \right )$ 
\State $\sbm^{k} \leftarrow \sbm^{k-1} + \xbm^k - \zbm^k $
\EndFor
\end{algorithmic}
\end{algorithm}

We denote the target distribution as $p_\xbm$ and the mismatched distribution as $\widehat{p}_\xbm$. The mismatched denoiser $\Dhat_\sigma$ is a \emph{minimum mean squared error (MMSE)} estimator for the AWGN denoising problem
\begin{equation}\label{Eq:noisemodel}
    \vbm = \xbm +\ebm \quad \text{with} \quad \xbm \sim \widehat{p}_\xbm, \quad \ebm \sim \Ncal(0, \sigma^2 \Ibm).
\end{equation}
The MMSE denoiser is the conditional mean estimator for~\eqref{Eq:noisemodel} and can be expressed as
\begin{equation}
\label{Eq:MMSEDenoiser}
\Dhat_{\sigma}(\vbm) \defn \E[\xbm | \vbm] = \int_{\R^{n}} \xbm \widehat{p}_{\xbm | \vbm} (\xbm | \vbm) \d \xbm,
\end{equation}
where $\widehat{p}_{\xbm | \vbm} (\xbm | \vbm) \propto G_\sigma(\vbm-\xbm) \widehat{p}_\xbm(\xbm)$, with $G_\sigma$ denoting the Gaussian density.
We refer to the MMSE estimator $\Dhat_\sigma$,  corresponding to the mismatched data distribution $\widehat{p}_\xbm$, as the mismatched prior. 

\medskip\noindent
Since the integral~\eqref{Eq:MMSEDenoiser} is generally intractable, in practice, the denoiser corresponds to a deep model trained to minimize the mean squared error (MSE) loss  
\begin{equation}
\label{Eq:MSELoss}
\Lcal(\Dhat_\sigma) = \E \left[ \|\xbm - \Dhat_{\sigma}(\vbm)\|_2^2 \right].
\end{equation}
MMSE denoisers trained using the MSE loss are optimal with respect to the widely used image-quality metrics in denoising, such as signal-to-noise ratio (SNR), and have been extensively used in the PnP literature~\cite{Xu.etal2020, Laumont.etal2022, bigdeli2017, kadkhodaie2021stochastic, gan2023block}.

\medskip\noindent
When using a mismatched prior in PnP-ADMM, we replace Step 4 in Algorithm~\ref{Alg:InexactPnPADMM} by 
\begin{equation}
    \zbm^k\leftarrow \Dhat_{\sigma} \left (\xbm^{k} + \sbm^{k-1} \right ), 
\end{equation}
where $\Dhat_\sigma$ is the mismatched MMSE denoiser. To avoid confusion, we denote by $\zbm^k$ and $\zbmbar^k$ the outputs of the mismatched and target denoisers at the $k$ iteration, respectively. Consequently, we have 
$\zbmbar^k = \Dsf_\sigma (\xbm^{k} + \sbm^{k-1}),$ where $\Dsf_\sigma$ is the target MMSE denoiser.

\subsection{Theoretical Analysis}\label{subsec:theory}

\medskip\noindent 
Our analysis relies on the following set of assumptions that serve as sufficient conditions.  
\begin{assumption}
\label{As:NonDegenerate}
The prior distributions  $p_{\xbm}$ and  $\widehat{p}_{\xbm}$, denoted as target and mismatched priors respectively, are non-degenerate over $\R^{n}$.
\end{assumption}

\medskip\noindent
A distribution is considered degenerate over $\R^n$ if its support is confined to a lower-dimensional manifold than the dimensionality of $n$. Assumption~\ref{As:NonDegenerate} is useful to establish an explicit link between a MMSE denoiser and its associated regularizer. For example, the regularizer $h$ associated with the target MMSE denoiser $\Dsf_\sigma$ can be expressed as (see~\cite{Gribonval2011, Xu.etal2020} for background)
\begin{equation}
\label{Eq:ExpReg}
h (\xbm) \defn
\begin{cases}
-\frac{1}{2\gamma}\|\xbm - \Dsf_{\sigma}^{-1}(\xbm)\|_2^2 + \frac{\sigma^2}{\gamma} h_{\sigma}(\Dsf_{\sigma}^{-1}(\xbm)) & \text{for } \xbm \in \Im(\Dsf_{\sigma}) \\
+\infty & \text{for } \xbm \notin \Im(\Dsf_{\sigma}),
\end{cases}
\end{equation}
where $\gamma > 0$ denotes the penalty parameter, $\Dsf_{\sigma}^{-1}: \Im(\Dsf_{\sigma}) \rightarrow \R^{n}$ represent a well defined and smooth inverse mapping over $\Im(\Dsf_{\sigma})$, and $h_{\sigma}(\cdot) \defn -\log(p_{\ubm}(\cdot))$, with $p_{\ubm}$ denoting the probability distribution over the AWGN corrupted observations
$$
    \ubm = \xbm +\ebm \quad \text{with} \quad \xbm \sim p_\xbm, \quad \ebm \sim \Ncal(0, \sigma^2 \Ibm),  
$$
(the derivation is provided in Section~\ref{ssec:MMSEdenoiseProx} for completeness). Note that the smoothness of both  $\Dsf_{\sigma}^{-1}$ and $h_{\sigma}$ guarantees the smoothness of the function $h$. Additionally,  similar connection exist between the mismatched MMSE denoiser $\Dhat_\sigma$ and the regularizer $\hine(\xbm)$, with $\hine_{\sigma}(\cdot) \defn -\log(\widehat{p}_{\vbm}(\cdot))$ characterizing  the relationship between mismatched denoiser and shifted distribution.
\begin{assumption}
\label{As:LipschitzDataFit}
The function $g$ is continuously differentiable.
\end{assumption}

\medskip\noindent
This  assumption is a standard assumption used in nonconvex optimization, specifically in the context of inverse problems~\cite{li2018simple, jiang2019structured,yashtini2021multi}.

\begin{assumption}
\label{As:BoundedFromBelow}
The data-fidelity term and the implicit regularizers are bounded from below.
\end{assumption}

\medskip\noindent
Assumption~\ref{As:BoundedFromBelow} implies that there exists $f^\ast > -\infty$ such that $f(\xbm) \geq f^\ast$ for all $\xbm \in \R^n$.

\begin{assumption}
\label{As:LipschitzPrior} 
The denoisers $\Dsf_\sigma$ and $\Dhat_\sigma$ have the same range $\Im(\Dsf_\sigma)$.
Additionally, functions $h$ and $\hine$ associated with $\Dsf_\sigma$ and $\Dhat_\sigma$, are continuously differentiable with $L$-Lipschitz continuous gradients over $\Im(\Dsf_\sigma).$
\end{assumption}

\medskip\noindent
It is known (see \cite{Gribonval2011, Xu.etal2020}) that functions $h$ and $\hine$ are infinitely differentiable over their ranges. The assumption that the two image denoisers have the same range is also a relatively mild assumption. Ideally, both denoisers would have the same range corresponding to the set of desired images. Assumption~\ref{As:LipschitzPrior} is thus a mild extension that further requires Lipschitz continuity of the gradient over the range of denoisers.

\begin{assumption}
\label{As:InexactDistance}
The mismatched denoiser $\Dhat_{\sigma}$ satisfies
\begin{equation*}
\|\Dhat_{\sigma}(\vbm^k)-\Dsf_{\sigma}(\vbm^k)\|_2 \leq \delta_k, \quad k = 1, 2, 3, \ldots
\end{equation*}
where $\Dhat_{\sigma}$ is given in~\eqref{Eq:MMSEDenoiser} and $\vbm^k = \xbm^{k}+\sbm^{k-1}$ in Algorithm~\ref{Alg:InexactPnPADMM}.
\end{assumption}

\medskip\noindent
Our analysis assumes that at every iteration,  PnP-ADMM uses a mismatched MMSE denoiser, derived from a  shifted distribution. We consider the case where at iteration $k$ of PnP-ADMM, the distance of the outputs of $\Dsf_{\sigma}$ and $\Dhat_{\sigma}$ is bounded by a constant $\delta_k$.

\begin{assumption}
    \label{As:BoundIters}
    For the sequence $\{\xbm^k, \zbm^k, \sbm^k\}$ generated by iterations of PnP-ADMM with mismatched MMSE denoiser in Algorithm~\ref{Alg:InexactPnPADMM}, there exists a constant $R$ such that 
    \begin{equation*}
       \left \|\zbm^k - \zbm^*\right\|_2 \leq R,  \quad k =1, 2, 3, \ldots  
    \end{equation*}
    where $\left(\xbm^\ast, \zbm^\ast, \sbm^\ast\right)$ are stationary points of augmented Lagrangian in~\eqref{Eq:AugLagrangian}. 
\end{assumption}

\medskip\noindent
This assumption is a reasonable assumption since many images have bounded pixel values, for example $[0,255]$ or $[0,1]$.

\medskip\noindent
We are now ready to present our convergence result under mismatched MMSE denoisers.

\begin{theorem}\label{Thm:MainThm}
Run PnP-ADMM using a \textbf{mismatched} MMSE denoiser for $t \geq 1$ iterations under Assumptions~\ref{As:NonDegenerate}-\ref{As:BoundIters} with the penalty parameter  $0<\gamma\leq 1/(4L)$. Then, we have 
    \begin{equation*}
              \min_{1\leq k\leq t} \left \|\nabla f\left (\xbm^k\right )\right \|_2^2 \leq  \frac{1}{t} \sum_{k=1}^t\left\|\nabla f\left (\xbm^k\right )\right\|^2_2 
    \leq \frac{A_1}{t} \left(\phi\left (\xbm^0, \zbm^0, \sbm^0\right)-\phi^\ast\right) +  A_2 \varepsilonbar_t
    \end{equation*}
where $A_1>0$ and $A_2 >0$ are iteration independent constants, $\phi^* = \phi(\xbm^*, \zbm^*, \sbm^*)$, and $\varepsilonbar_t \defn (1/t)\left(\varepsilon_1 + \cdots + \varepsilon_t\right)$ is the error term that is an average of the quantities $\varepsilon_k \defn \max\{\delta_k, \delta_k^2\}$. In addition, if the sequence $\{\delta_i\}_{i\geq 1}$ is summable, we have that  $\|\nabla f(\xbm^t)\|_2 \to 0$  as $t \to \infty$.
\end{theorem}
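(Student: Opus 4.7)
\emph{Proof plan.} The strategy is to use the augmented Lagrangian $\phi$ from~\eqref{Eq:AugLagrangian} as a Lyapunov function: establish a per-iteration ``sufficient descent'' of $\phi$ up to an $O(\varepsilon_k)$ perturbation arising from the mismatched denoiser, telescope over $k=1,\ldots,t$, and translate the resulting sum bound into the desired bound on $\|\nabla f(\xbm^k)\|_2^2$. The entire analysis is driven by the explicit link between MMSE denoisers and regularizers in~\eqref{Eq:ExpReg}, which reduces the PnP updates to proximal and gradient manipulations on smooth functions.

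First, using Assumption~\ref{As:NonDegenerate} together with~\eqref{Eq:ExpReg}, I would identify $\Dsf_\sigma$ and $\Dhat_\sigma$ as proximal operators of the smooth regularizers $h$ and $\hine$, respectively. This yields three clean optimality relations for the iterates of Algorithm~\ref{Alg:InexactPnPADMM}: the $\xbm$-update gives $\nabla g(\xbm^k)=-\gamma^{-1}(\xbm^k-\zbm^{k-1}+\sbm^{k-1})$; the mismatched $\zbm$- and $\sbm$-updates combine into the dual representation $\sbm^k=\gamma\nabla\hine(\zbm^k)$ (since $\sbm^k=\sbm^{k-1}+\xbm^k-\zbm^k$); and if one applied the target denoiser instead at iteration $k$, the resulting point $\zbmbar^k$ would satisfy $\nabla h(\zbmbar^k)=\gamma^{-1}(\xbm^k+\sbm^{k-1}-\zbmbar^k)$, with $\|\zbmbar^k-\zbm^k\|_2\leq\delta_k$ by Assumption~\ref{As:InexactDistance}.

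Next I would decompose $\phi^k-\phi^{k-1}$ into the three partial changes corresponding to each update. The $\xbm$-step is non-increasing because $\xbm^k$ minimizes $\phi(\cdot,\zbm^{k-1},\sbm^{k-1})$ in the first argument. For the $\zbm$-step I would insert the hypothetical $\zbmbar^k$: the comparison $\phi(\xbm^k,\zbmbar^k,\sbm^{k-1})-\phi(\xbm^k,\zbm^{k-1},\sbm^{k-1})\leq -\alpha\|\zbmbar^k-\zbm^{k-1}\|_2^2$ follows from the strong convexity of $h+\tfrac{1}{2\gamma}\|\cdot-\xbm^k-\sbm^{k-1}\|_2^2$ when $\gamma\leq 1/L$ (Assumption~\ref{As:LipschitzPrior}), while the residual $\phi(\xbm^k,\zbm^k,\sbm^{k-1})-\phi(\xbm^k,\zbmbar^k,\sbm^{k-1})$ is bounded by $C_1\varepsilon_k$ via the Lipschitz gradient of $h$, the boundedness of $\zbm^k$ (Assumption~\ref{As:BoundIters}), and $\|\zbm^k-\zbmbar^k\|_2\leq\delta_k$. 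The $\sbm$-step increases $\phi$ by $\gamma^{-1}\|\sbm^k-\sbm^{k-1}\|_2^2$, but since $\sbm^k=\gamma\nabla\hine(\zbm^k)$ and $\nabla\hine$ is $L$-Lipschitz, this is at most $\gamma L^2\|\zbm^k-\zbm^{k-1}\|_2^2$. Picking $\gamma\leq 1/(4L)$ makes the $\sbm$-increase strictly dominated by the $\zbm$-descent and produces
\begin{equation*}
\phi^k-\phi^{k-1} \;\leq\; -c\,\|\zbm^k-\zbm^{k-1}\|_2^2 + C\varepsilon_k.
\end{equation*}
Telescoping over $k=1,\ldots,t$ and invoking $\phi^t\geq\phi^\ast$ (Assumption~\ref{As:BoundedFromBelow}) bounds $\sum_{k=1}^t\|\zbm^k-\zbm^{k-1}\|_2^2$ by $c^{-1}(\phi^0-\phi^\ast)+c^{-1}C\,t\varepsilonbar_t$.

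Finally, I would express $\nabla f(\xbm^k)=\nabla g(\xbm^k)+\nabla h(\xbm^k)$, substitute the $\xbm$-optimality condition for $\nabla g(\xbm^k)$, and expand $\nabla h(\xbm^k)=\nabla h(\zbmbar^k)+[\nabla h(\xbm^k)-\nabla h(\zbmbar^k)]$ with $\nabla h(\zbmbar^k)=\gamma^{-1}(\xbm^k+\sbm^{k-1}-\zbmbar^k)$. Combining this with the identity $\xbm^k-\zbm^k=\sbm^k-\sbm^{k-1}$, Assumption~\ref{As:LipschitzPrior}, and $\|\zbmbar^k-\zbm^k\|_2\leq\delta_k$ yields
\begin{equation*}
\|\nabla f(\xbm^k)\|_2^2 \;\leq\; C'\,\|\zbm^k-\zbm^{k-1}\|_2^2 + C''\,\delta_k^2.
\end{equation*}
Averaging over $k=1,\ldots,t$, combining with the telescoped bound, and noting $\delta_k^2\leq\varepsilon_k$ produces the stated inequality with explicit constants $A_1, A_2$. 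For the summable addendum, $\sum_k\delta_k<\infty$ implies $\sum_k\varepsilon_k<\infty$, so the telescoped bound gives $\sum_{k\geq 1}\|\nabla f(\xbm^k)\|_2^2<\infty$, hence $\|\nabla f(\xbm^k)\|_2\to 0$. The main technical obstacle is the $\zbm$-step analysis: because $\Dhat_\sigma$ does not minimize $\phi(\xbm^k,\cdot,\sbm^{k-1})$, one must compare against the hypothetical $\zbmbar^k$, use its first-order optimality to eliminate the inner-product terms linear in $\zbm^k-\zbmbar^k$, and carefully absorb the remainder into $\varepsilon_k$; coupled with this is the need to keep $\gamma$ small enough that the natural $\sbm$-induced increase of $\phi$ is strictly dominated by the $\zbm$-step descent.
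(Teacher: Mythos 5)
Your plan follows essentially the same route as the paper's proof: a sufficient-descent lemma for the augmented Lagrangian with an $O(\max\{\delta_k,\delta_k^2\})$ perturbation obtained by comparing the mismatched output $\zbm^k$ to the hypothetical target output $\zbmbar^k$ (using its first-order optimality, the $L$-Lipschitz gradients of $h$ and $\hine$, and the radius $R$ from Assumption~\ref{As:BoundIters}), followed by telescoping and a conversion of $\|\zbm^k-\zbm^{k-1}\|_2$ into a bound on $\|\nabla f(\xbm^k)\|_2$ via the prox/denoiser optimality conditions. The one point you gloss over is that $\phi^t\geq\phi^\ast$ is not a direct consequence of Assumption~\ref{As:BoundedFromBelow} because of the dual inner-product term in~\eqref{Eq:AugLagrangian}; the paper devotes a separate lemma to showing $\phi(\xbm^k,\zbm^k,\sbm^k)>g(\xbm^k)+h(\xbm^k)-2(1+\gamma L)RL\delta_k$ using the same optimality relations together with Assumptions~\ref{As:InexactDistance} and~\ref{As:BoundIters}, and this short argument should be supplied.
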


\medskip\noindent
When we replace the mismatched MMSE denoiser with the \emph{target} MMSE denoiser, we recover the traditional PnP-ADMM. To highlight the impact of the mismatch, we next provide the same statement but using the target denoiser.
\begin{theorem}\label{Thm:ExactPnPADMM}
    Run PnP-ADMM with the MMSE denoiser for $t \geq 1$ iterations under Assumptions~\ref{As:NonDegenerate}-\ref{As:LipschitzPrior} with the penalty parameter  $0<\gamma\leq 1/(4L)$. Then, we have 
    \begin{equation*}
       \min_{1\leq k\leq t} \left \|\nabla f\left (\xbm^k\right )\right \|_2^2 \leq \frac{1}{t} \sum_{k=1}^t \left\|\nabla f(\xbm^k)\right \|^2_2 
    \leq \frac{C}{t} \left(\phi(\xbm^0, \zbm^0, \sbm^0)-\phi^\ast\right) , 
    \end{equation*}
where $C>0$ is a constant independent of iteration.
\end{theorem}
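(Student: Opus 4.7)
The plan is to view PnP-ADMM with the exact MMSE denoiser as standard ADMM applied to the smooth nonconvex objective $f = g+h$, where $h$ is the implicit regularizer in~\eqref{Eq:ExpReg} that, by Assumption~\ref{As:LipschitzPrior}, has an $L$-Lipschitz gradient on $\Im(\Dsf_\sigma)$. The augmented Lagrangian $\phi$ from~\eqref{Eq:AugLagrangian} will serve as a Lyapunov function, and the overall strategy is (i) to prove a per-iteration descent inequality for $\phi$, (ii) to absorb the dual ascent by the primal $\zbm$-descent using smoothness of $h$, and (iii) to relate $\|\nabla f(\xbm^k)\|_2^2$ to the successive increments before telescoping across $k = 1,\dots,t$.

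Concretely, I would first write the optimality conditions of the three updates in Algorithm~\ref{Alg:InexactPnPADMM}. The $\xbm$-update yields $\nabla g(\xbm^k) + (1/\gamma)(\xbm^k - \zbm^{k-1} + \sbm^{k-1}) = 0$, and identifying $\Dsf_\sigma$ with $\prox_{\gamma h}$ via the Gribonval-type characterization underlying~\eqref{Eq:ExpReg} gives $\nabla h(\zbm^k) = (1/\gamma)(\xbm^k + \sbm^{k-1} - \zbm^k)$, which together with the dual update collapses to the key identity $\sbm^k = \gamma\,\nabla h(\zbm^k)$. Combined with $L$-Lipschitz continuity of $\nabla h$, this yields the crucial dual-bound $\|\sbm^k - \sbm^{k-1}\|_2 \leq \gamma L\|\zbm^k - \zbm^{k-1}\|_2$ on which the remainder of the argument hinges.

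Next, I would derive a descent inequality for $\phi$ by splitting the change $\phi(\xbm^k,\zbm^k,\sbm^k) - \phi(\xbm^{k-1},\zbm^{k-1},\sbm^{k-1})$ across the three block updates. The $\xbm$- and $\zbm$-updates contribute standard quadratic descents $-\tfrac{1}{2\gamma}\|\xbm^k-\xbm^{k-1}\|_2^2$ and $-\tfrac{1}{2\gamma}\|\zbm^k-\zbm^{k-1}\|_2^2$, respectively, while the dual step contributes an ascent of $(1/\gamma)\|\sbm^k - \sbm^{k-1}\|_2^2$. Substituting the dual-bound above turns this ascent into at most $\gamma L^2\|\zbm^k - \zbm^{k-1}\|_2^2$, and the step size restriction $\gamma \leq 1/(4L)$ ensures the net coefficient on $\|\zbm^k - \zbm^{k-1}\|_2^2$ remains strictly negative. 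Telescoping from $k = 1$ to $t$ and using $\phi \geq f^\ast$ from Assumption~\ref{As:BoundedFromBelow} then bounds $\sum_{k=1}^t (\|\xbm^k - \xbm^{k-1}\|_2^2 + \|\zbm^k - \zbm^{k-1}\|_2^2)$ by $\phi(\xbm^0,\zbm^0,\sbm^0) - \phi^\ast$ up to a positive constant.

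To close the argument I would express $\nabla f(\xbm^k) = \nabla g(\xbm^k) + \nabla h(\xbm^k)$, eliminate $\nabla g(\xbm^k)$ via the $\xbm$-optimality condition, and use $\sbm^k = \gamma\,\nabla h(\zbm^k)$ together with $L$-Lipschitzness of $\nabla h$ to rewrite $\nabla f(\xbm^k)$ as a linear combination of the increments $(\xbm^k - \xbm^{k-1})$, $(\zbm^k - \zbm^{k-1})$, and $(\sbm^k - \sbm^{k-1})$. Applying $\|a+b+c\|_2^2 \leq 3(\|a\|_2^2 + \|b\|_2^2 + \|c\|_2^2)$ and invoking the dual-bound once more yields $\|\nabla f(\xbm^k)\|_2^2 \leq C'(\|\xbm^k - \xbm^{k-1}\|_2^2 + \|\zbm^k - \zbm^{k-1}\|_2^2)$, after which summing and dividing by $t$ produces the claimed $O(1/t)$ rate with a single iteration-independent constant $C$. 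The main obstacle is the Lyapunov step: getting a genuine descent requires that the dual ascent be fully dominated by the primal $\zbm$-descent, which is exactly what the condition $\gamma \leq 1/(4L)$ is designed to enforce; the remaining bookkeeping is routine nonconvex ADMM analysis.
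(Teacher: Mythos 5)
Your overall architecture --- the augmented Lagrangian as a Lyapunov function, the per-block decomposition of its decrease, the identity $\sbm^k = \gamma\,\nabla h(\zbm^k)$ yielding the dual bound $\|\sbm^k-\sbm^{k-1}\|_2 \le \gamma L\|\zbm^k-\zbm^{k-1}\|_2$, telescoping, and then the block optimality conditions to bound $\|\nabla f(\xbm^k)\|_2$ --- is exactly the paper's route (Lemmas~\ref{lem:LagDecExact} and~\ref{Lem:BoundBelowExact}). There is, however, a genuine gap in how you handle the $\xbm$-block. You claim the $\xbm$-update contributes a quadratic descent $-\tfrac{1}{2\gamma}\|\xbm^k-\xbm^{k-1}\|_2^2$. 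That inequality is the standard consequence of the $\tfrac{1}{\gamma}$-strong convexity of the prox subproblem $\xbm\mapsto \tfrac{1}{2\gamma}\|\xbm-\zbm^{k-1}+\sbm^{k-1}\|_2^2+g(\xbm)$, which requires $g$ to be convex. The theorem is stated for nonconvex $g$ under Assumption~\ref{As:LipschitzDataFit} (mere continuous differentiability, with no Lipschitz constant for $\nabla g$), so all you are entitled to is the weak inequality $\phi(\xbm^k,\zbm^{k-1},\sbm^{k-1})\le\phi(\xbm^{k-1},\zbm^{k-1},\sbm^{k-1})$ coming from $\xbm^k$ being a minimizer of that subproblem --- which is all the paper uses.

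This matters because your final step bounds $\|\nabla f(\xbm^k)\|_2^2$ by $C'\bigl(\|\xbm^k-\xbm^{k-1}\|_2^2+\|\zbm^k-\zbm^{k-1}\|_2^2\bigr)$, so you would need the telescoped Lyapunov decrease to control $\sum_k\|\xbm^k-\xbm^{k-1}\|_2^2$ --- and without the quadratic descent in the $\xbm$-block it does not. The repair is to eliminate the $\xbm$-increment entirely: combining the two optimality conditions gives $\nabla f(\xbm^k) = \bigl(\nabla h(\xbm^k)-\nabla h(\zbm^k)\bigr) + \tfrac{1}{\gamma}\bigl(\zbm^{k-1}-\zbm^k\bigr)$, and the first term is controlled by $L\|\xbm^k-\zbm^k\|_2 = L\|\sbm^k-\sbm^{k-1}\|_2 \le \gamma L^2\|\zbm^k-\zbm^{k-1}\|_2$, so that $\|\nabla f(\xbm^k)\|_2\le(1/\gamma+\gamma L^2)\|\zbm^k-\zbm^{k-1}\|_2$ and only $\sum_k\|\zbm^k-\zbm^{k-1}\|_2^2$ needs to be controlled by the telescoped descent. (A smaller instance of the same issue: the $\zbm$-block yields $-\tfrac{1-\gamma L}{2\gamma}\|\zbm^k-\zbm^{k-1}\|_2^2$ rather than $-\tfrac{1}{2\gamma}\|\zbm^k-\zbm^{k-1}\|_2^2$, since $h$ is only $L$-smooth and not convex; this is harmlessly absorbed by the condition $\gamma\le 1/(4L)$, as you anticipate.)
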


\medskip\noindent
The proof of Theorem~\ref{Thm:MainThm} is provided in the appendix. For completeness, we also provide the proof of Theorem~\ref{Thm:ExactPnPADMM}. Theorem~\ref{Thm:MainThm} provides insight into the convergence of PnP-ADMM using mismatched MMSE denoisers. It states that if $\delta_k$ are summable, the iterates of PnP-ADMM with mismatched denoisers satisfy $\nabla f(\xbm^t) \to \bm{0} $ as $t \to 0$ and PnP-ADMM converges to a stationary point of the objective function $f$ associated with the target denoiser. On the other hand, if the sequence $\delta_k$ is not summable, the convergence is only up to an error term that depends on the distance between the target and mismatched denoisers. Theorem~\ref{Thm:MainThm} can be viewed as a more flexible alternative for the convergence analyses in~\cite{Sun.etal2021, Chan.etal2019, Ryu.etal2019}. While the analyses in the prior works assume convex $g$ and nonexpansive residual, nonexpansive or bounded  denoisers, our analysis considers that denoiser  $\Dsf_\sigma$ is a mismatched MMSE estimator, where the mismatched denoiser distance to the target denoiser is bounded by $\delta_k$ at each iteration of PnP-ADMM. 

\begin{figure}[!t]
    \centering
\includegraphics[width=0.99\textwidth]{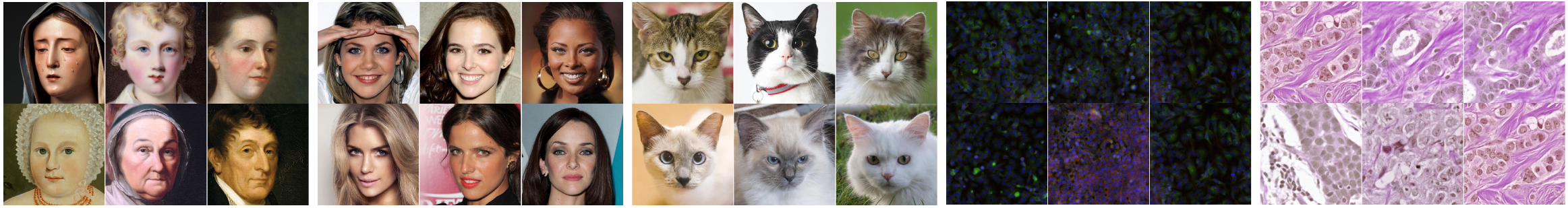}
    \caption{\emph{Sample images from the datasets used for training the denoisers. From left to right: MetFaces~\cite{karras2020training}, CelebA~\cite{liu2015faceattributes}, AFHQ~\cite{choi2020stargan}, RxRx1~\cite{sypetkowski2023rxrx1}, and ~BreCaHAD~\cite{aksac2019brecahad}. }}
    \label{fig:gts}
\end{figure}

\medskip\noindent
 In conclusion, PnP-ADMM using a mismatched MMSE denoiser approximates the solution obtained by PnP-ADMM using the target MMSE denoiser with an error that depends on the discrepancy between the denoisers. Therefore, one can control the accuracy of PnP-ADMM using mismatched denoisers by controlling the error term $\varepsilonbar_t$. This error term can be controlled by using domain adaptation techniques for decreasing the distance between mismatched and target denoisers, thus closing the gap in the performances of PnP-ADMM. We numerically validate this observation in Section~\ref{sec:numVal} by considering the fine-tuning of mismatched denoisers to the target distribution with a limited number of samples.

\section{Numerical Validation}
\label{sec:numVal}

We consider PnP-ADMM with mismatched and adapted denoisers for the task of image super-resolution. Our first set of results shows how distribution shifts relate to the prior disparities and their impact on PnP recovery performance. Our second set of results shows the impact of domain adaptation on the denoiser gap and PnP performance. We use the traditional $l_2$-norm as the data-fidelity term. To provide an objective evaluation of the final image quality, we use two established quantitative metrics: Peak Signal-to-Noise Ratio (PSNR) and Structural Similarity Index (SSIM). 

\medskip\noindent
We use DRUNet architecture~\cite{Zhang.etal2021dpir} for all image denoisers. To model prior mismatch, we train denoisers on five image datasets: MetFaces~\cite{karras2020training}, AFHQ~\cite{choi2020stargan}, CelebA~\cite{liu2015faceattributes}, BreCaHAD~\cite{aksac2019brecahad}, and RxRx1~\cite{sypetkowski2023rxrx1}. Figure~\ref{fig:gts} illustrates samples from the datasets. Our training dataset consists of 1000 randomly chosen, resized, or cropped image slices, each measuring $256 \times 256$ pixels. Unlike several existing PnP methods~\cite{Sun.etal2021, Liu.etal2021b} that suggest the inclusion of the spectral normalization layers into the CNN to enforce Lipschitz continuity on the denoisers, we directly train denoisers without any nonexpansiveness constraints. 

\subsection{Impact of Prior Mismatch}
The observation model for single image super-resolution is $\ybm = \Sbm \Hbm \xbm +\ebm$, where $\Sbm \in \R^{m\times n}$ is a standard $s$-fold downsampling matrix with $n = m \times s^2$, $\Hbm \in \R^{n \times n}$ is a convolution with anti-aliasing kernel, and $\ebm$ is the noise.  To compute the proximal map efficiently for the $l_2$-norm data-fidelity term (Step 3 in Algorithm~\ref{Alg:InexactPnPADMM}), we use the closed-form solution outlined in~\cite{Zhang.etal2021dpir, Zhao2016FastSI}. Similarly to~\cite{Zhang.etal2021dpir}, we use four isotropic kernels with different standard deviation $\{0.7, 1.2,1.6,2\}$, as well as four anisotropic kernels depicted in Table~\ref{tab:SR_results_mismatched}. We perform downsampling at scales of $s=2$ and $s = 4$.

\begin{figure}[!t]
    \centering
    \includegraphics[width=0.99\textwidth]{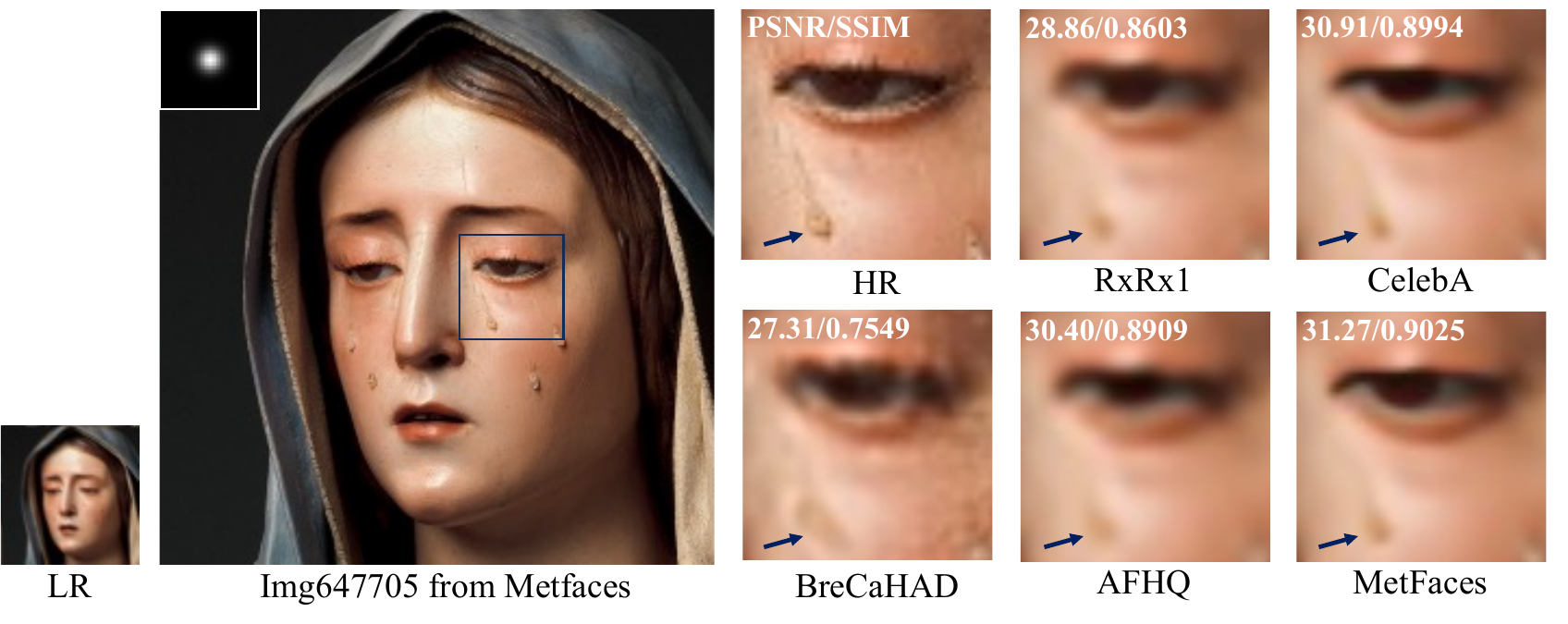}
    \caption{\emph{Visual evaluation of PnP-ADMM on image super-resolution using denoisers trained on several datasets. Performance is  reported in terms of PSNR (dB) and SSIM for an image from the MetFaces dataset. Images are downsampled by a scale of $s=4$ and convolved with the blur kernel shown on the top left corner of the ground truth image. Note how the disparities in the training distributions of denoisers directly influence the performance of PnP. The denoisers containing images most similar to MetFaces offer the best performance.}}
    \label{fig:SR_mismatched}
\end{figure}
\begin{table}[!t]\footnotesize
 \caption{\emph{PSNR (dB) and SSIM values for image super-resolution using PnP-ADMM under different priors on a test set from the MetFaces~\cite{karras2020training}. We highlighted the \textbf{best} performing  and the {\color{lightred} \textbf{worst}} performing priors. BreCaHAD is the worst prior that is also the one visually most different from MetFaces.}}
\centering\setlength\tabcolsep{2pt}\renewcommand{\arraystretch}{.92}
    \begin{tabular}{c l c c c c c c c c c c c }
         \multirow{2}{*}{Kernels}  & \multirow{2}{*}{Prior} \ \ & \multicolumn{2}{c}{$s = 2$} & &
         \multicolumn{2}{c}{$s = 4$} & &\multicolumn{2}{c}{\textit{Avg}} \\
        \cmidrule(lr){3-5} \cmidrule(lr){6-7}  \cmidrule(lr){8-10}
         &   & \ PSNR &   SSIM &  & \ PSNR &   SSIM &  & \ PSNR &   SSIM &  \\
        \midrule
        \multirow{5}{*}{ \includegraphics[width=0.055\textwidth]{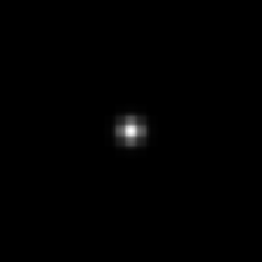}
        \includegraphics[width=0.055\textwidth]{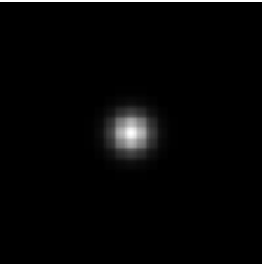}
        \includegraphics[width=0.055\textwidth]{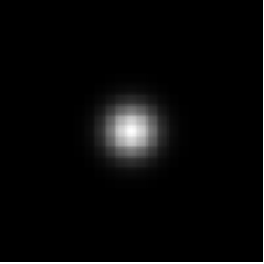}
        \includegraphics[width=0.055\textwidth]{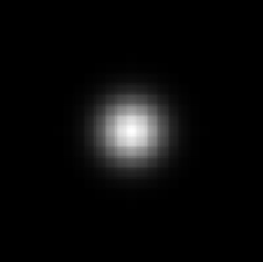}}
        & BreCaHAD  & \color{lightred}$\mathbf{31.96}$ & \color{lightred} $\mathbf{0.8108}$ && \color{lightred}$\mathbf{28.41}$ & \color{lightred}$\mathbf{0.6937}$ && \color{lightred}$\mathbf{30.18}$ & \color{lightred}$\mathbf{0.7522}$\\
        & RxRx1     & $33.45$ & $0.8683$ && $30.45$ & $0.7906$ &&  $31.95$ & $0.8294$ \\
         & AFHQ      & $33.74$ & $0.8697$ && $30.38$ & $0.7825$ && $32.06$ &$0.8261$ \\
        & CelebA    & $33.96$ & $0.8731$ && $30.62$ & $0.7906$ && $32.29$  & $0.8318$ \\
        &  MetFaces & $\mathbf{34.07}$ & $\mathbf{0.8755}$ && $\mathbf{31.15}$ & $\mathbf{0.8053}$ && $\mathbf{32.61}$  & $\mathbf{0.8404}$\\
        \midrule
        \multirow{5}{*}{ \includegraphics[width=0.055\textwidth]{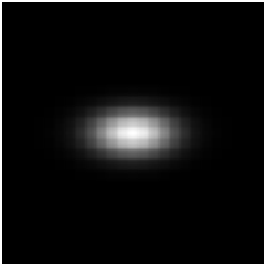}
        \includegraphics[width=0.055\textwidth]{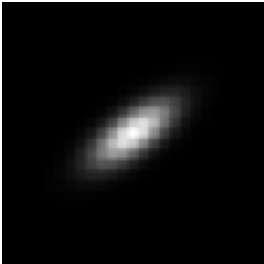}
        \includegraphics[width=0.055\textwidth]{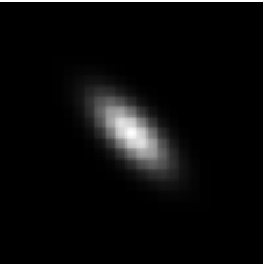}
        \includegraphics[width=0.055\textwidth]{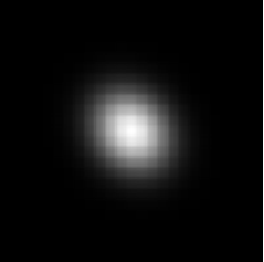}}
        & BreCaHAD  &   \color{lightred}$\mathbf{30.25}$ & \color{lightred}$\mathbf{0.7489}$ && \color{lightred}$\mathbf{28.99}$ & \color{lightred}$\mathbf{0.7083}$ && \color{lightred}$\mathbf{29.62}$ & \color{lightred}$\mathbf{0.7286}$\\
        & RxRx1     &   $32.22$ & $0.8348$ && $30.80$ & $0.7948$ && $31.51$ &$0.8148$\\
         & AFHQ      &   $32.63$ & $0.8410$ && $31.06$ & $0.8014$ && $ 31.84$ &$0.8212 $\\
        & CelebA    &   $32.62$ & $0.8404$ && $31.30$ & $0.8070$ && $31.96$  & $0.8237$ \\
        & MetFaces  &   $\mathbf{32.85}$ & $\mathbf{0.8457}$ && $\mathbf{31.44}$ & $\mathbf{0.8089}$ && $\mathbf{32.14}$ &$\mathbf{0.8273}$\\
        
        \midrule
    \end{tabular}
    \label{tab:SR_results_mismatched}
\end{table}

\medskip\noindent
Figure~\ref{fig:SR_mismatched} illustrates the performance of PnP-ADMM using the target and four mismatched denoisers.
Note the suboptimal performance of PnP-ADMM using mismatched denoisers trained on the BreCaHAD, RxRx1, CelebA, and AFHQ datasets relative to PnP-ADMM using the target denoiser trained on the MetFaces dataset. 
Figure~\ref{fig:SR_mismatched} illustrates how distribution shifts can lead to mismatched denoisers, subsequently impacting the performance of PnP-ADMM. It's worth noting that the denoiser trained on the CelebA dataset~\cite{liu2015faceattributes}, which consists of facial images similar to MetFaces, is the best-performing mismatched denoiser. Table~\ref{tab:SR_results_mismatched} provides a quantitative evaluation of the PnP-ADMM performance with the target denoiser consistently outperforming all the other denoisers. Notably, the mismatched denoiser trained on the BreCaHAD dataset~\cite{aksac2019brecahad}, containing cell images that are most dissimilar to MetFaces, exhibits the worst performance.
 
\subsection{Domain Adaption}

In domain adaptation, the pre-trained mismatched denoisers are updated using a limited number of data from the target distribution. We investigate two adaptation scenarios: in the first, we adapt the denoiser initially pre-trained on the BreCaHAD dataset to the MetFaces dataset, and in the second, we use the denoiser initially pre-trained on CelebA for adaptation to the RxRx1 dataset.

\medskip\noindent
Figure~\ref{fig:disvsgap} illustrates the influence of domain adaptation on denoising and PnP-ADMM. The reported results are tested on RxRx1 and MetFaces datasets for the super-resolution task. The kernel used is shown on the top left corner of the ground truth image in Figure~\ref{fig:SR_mismatched} and the images are downsampled at the scale of $s=4$. Note how the denoising performance improves as we increase the number of images used for domain adaptation. This indicates that domain adaptation reduces the distance of mismatched and target denoisers. Additionally,  
note the direct correlation between the denoising capabilities of priors and the performance of PnP-ADMM. Figure~\ref{fig:disvsgap} shows that the performance of PnP-ADMM with mismatched denoisers can be significantly improved by adapting the mismatched denoiser to the target distribution, even with just four images from the target distribution. 

\medskip\noindent
Figure~\ref{fig:met_update} presents visual examples illustrating domain adaptation in PnP-ADMM for image super-resolution. 
The recovery performance is shown for two test images from the MetFaces using adapted denoisers against both target and mismatched denoisers. The experiment was conducted under the same settings as those in Figure~\ref{fig:SR_mismatched}. Note the effectiveness of domain adaptation in mitigating the impact of distribution shifts on PnP-ADMM. 
Table~\ref{tab:SR_results_updated} provides quantitative results of several adapted priors on the test data. The results presented in Table~\ref{tab:SR_results_updated} show the substantial impact of domain adaptation, using a limited number of data, in significantly narrowing the performance gap that emerges as a consequence of distribution shifts.

\begin{figure}[!t]
    \centering
     \includegraphics[width=0.95\textwidth,center]{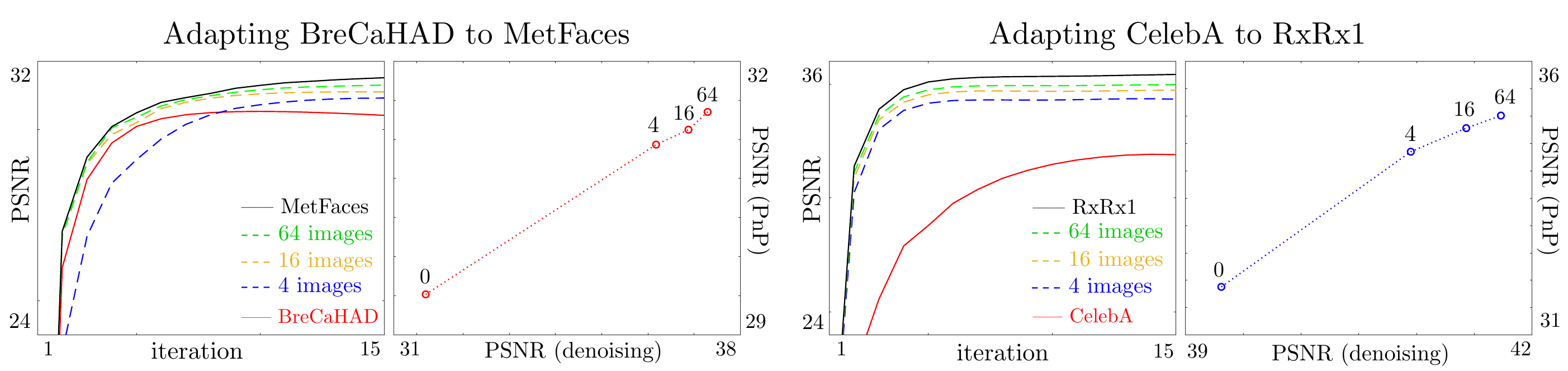}
    \caption{\emph{Illustration of prior mismatch and adaption in PnP-ADMM, where a denoiser trained on one dataset (BreCaHAD~
    \cite{aksac2019brecahad} or CelebA~\cite{liu2015faceattributes}) is used to recover an image from another dataset (MetFaces~\cite{karras2020training} or RxRx1~\cite{sypetkowski2023rxrx1}). We plot the convergence of PnP-ADMM in terms of PSNR (first and third figures) and the influence of adapted denoisers on the performance of PnP-ADMM (second and fourth figures). Note how adaptation with even few samples is enough to nearly close the performance gap in PnP-ADMM.}
    }
    \label{fig:disvsgap}
\end{figure}
\begin{figure}[!t]
    \includegraphics[width=1.043\textwidth,center]{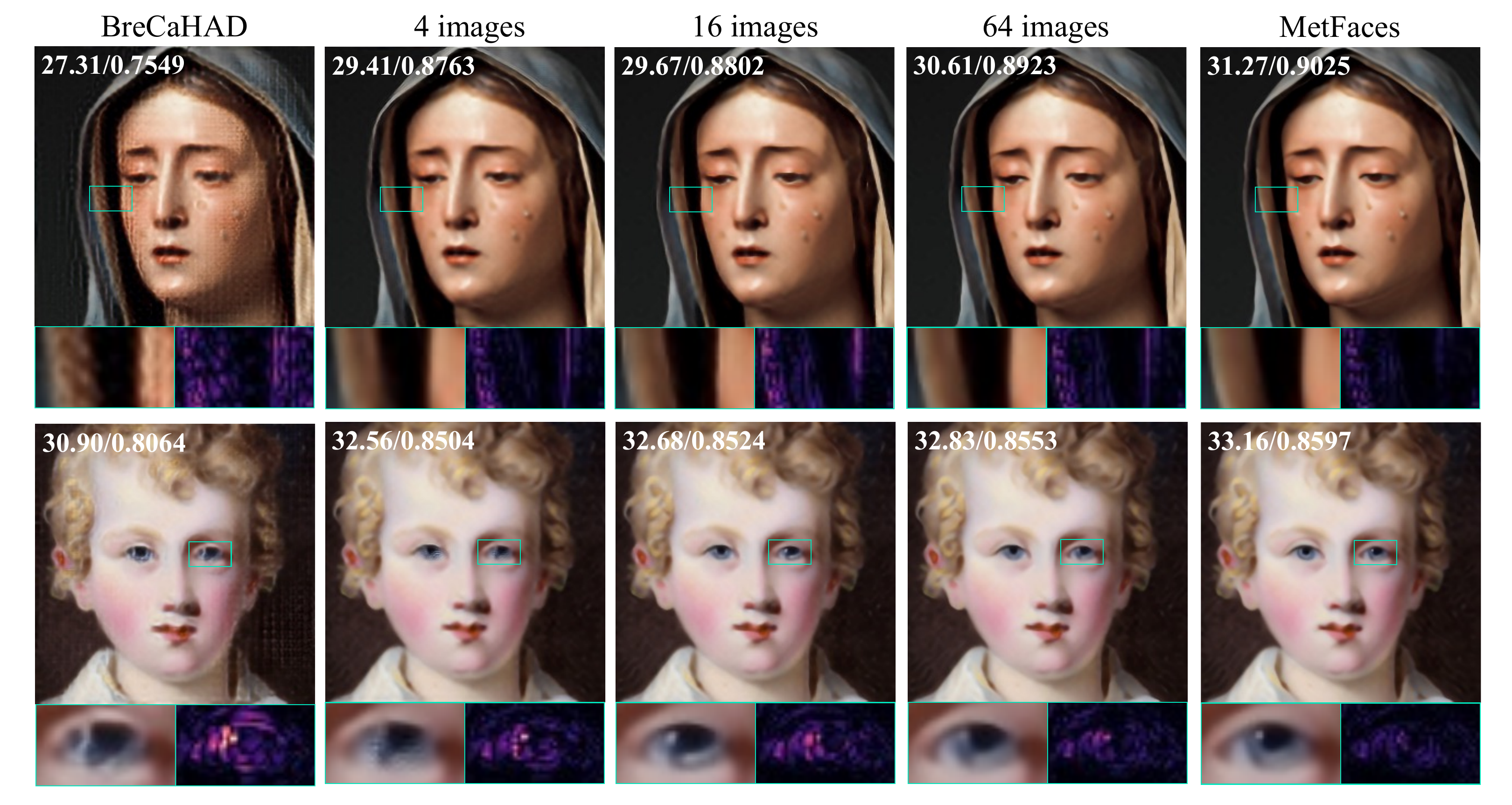}
    \caption{\emph{Visual comparison on super-resolution with target (MetFaces), mismatched (BreCaHAD), and adapted priors on two MetFaces test images. The images are downsampled by the scale of $s=4$. The performance is reported in terms of PSNR (dB) and SSIM. Note how the recovery performance increases by adaptation of mismatched priors to a larger set of images from the target distribution.}}
    \label{fig:met_update}
\end{figure}
\begin{table}[!t]\footnotesize
 \caption{\emph{PSNR (dB) and SSIM comparison of super-resolution with mismatched, target, and adapted denoisers for the test set from MetFaces, averaged for indicated kernels. We highlighted the \textbf{target},  {\color{lightred} \textbf{mismatched}}, and the {\color{lightblue} \textbf{best}} adapted priors.
    }}
\centering\setlength\tabcolsep{2pt}\renewcommand{\arraystretch}{.92}
    \begin{tabular}{c l c c c c c c c c c c c }
         \multirow{2}{*}{Kernels}  & \multirow{2}{*}{Prior} \ \ & \multicolumn{2}{c}{$s = 2$} & &
         \multicolumn{2}{c}{$s = 4$} & &\multicolumn{2}{c}{\textit{Avg}} \\
        \cmidrule(lr){3-5} \cmidrule(lr){6-7}  \cmidrule(lr){8-10}
         &   & \ PSNR &   SSIM &  & \ PSNR &   SSIM &  & \ PSNR &   SSIM &  \\
        \midrule
        \multirow{5}{*}{ \includegraphics[width=0.055\textwidth]{Fig/kernel1.png}
        \includegraphics[width=0.055\textwidth]{Fig/kernel2.png}
        \includegraphics[width=0.055\textwidth]{Fig/kernel3.png}
        \includegraphics[width=0.055\textwidth]{Fig/kernel4.png}}
       & BreCaHAD  & \color{lightred}$\mathbf{31.96}$ & \color{lightred} $\mathbf{0.8108}$ && \color{lightred}$\mathbf{28.41}$ & \color{lightred}$\mathbf{0.6937}$ && \color{lightred}$\mathbf{30.18}$ & \color{lightred}$\mathbf{0.7522}$\\
        &  4 imgs    & $32.51$ & $0.8510$ && $30.57$ & $0.7934$ &&  $31.54$ & $0.8222$ \\
        &  16 imgs   & $33.10$ & $0.8611$ && $30.65$ & $0.7961$ &&  $31.89$ & $0.8293$ \\
        &   32 imgs  & $33.30$ & $0.8649$ && $30.81$ & $0.8001$ && $32.05$  & $0.8325$ \\
        &   64 imgs  & \color{lightblue} $\mathbf{33.59}$ & \color{lightblue}$\mathbf{0.8698}$ && \color{lightblue}$\mathbf{30.84}$ & \color{lightblue}$\mathbf{0.7994}$ &&  \color{lightblue}$\mathbf{32.21}$ & \color{lightblue}$\mathbf{0.8346}$ \\
         &  MetFaces & $\mathbf{34.07}$ & $\mathbf{0.8755}$ && $\mathbf{31.15}$ & $\mathbf{0.8053}$ && $\mathbf{32.61}$  & $\mathbf{0.8404}$\\
        \midrule
        \multirow{5}{*}{ \includegraphics[width=0.055\textwidth]{Fig/kernel5.png}
        \includegraphics[width=0.055\textwidth]{Fig/kernel6.png}
        \includegraphics[width=0.055\textwidth]{Fig/kernel7.png}
        \includegraphics[width=0.055\textwidth]{Fig/kernel8.png}}
        & BreCaHAD  &   \color{lightred}$\mathbf{30.25}$ & \color{lightred}$\mathbf{0.7489}$ && \color{lightred}$\mathbf{28.99}$ & \color{lightred}$\mathbf{0.7083}$ && \color{lightred}$\mathbf{29.62}$ & \color{lightred}$\mathbf{0.7286}$\\
        &  4 imgs    & $31.59$ & $0.8215$ && $30.86$ & $0.7957$ && $31.22$  & $0.8086$  \\
        &  16 imgs   & $32.19$ & $0.8334$ && $31.05$ & $0.8009$ && $31.62$  & $0.8171$ \\
        &  32 imgs   & $32.34$ & $0.8371$ && $31.18$ & $0.8044$ &&  $31.76$ & $0.8207$ \\
        &  64 imgs   & \color{lightblue}$\mathbf{32.47}$ &\color{lightblue} $\mathbf{0.8397}$ && \color{lightblue}$\mathbf{31.26}$ & \color{lightblue}$\mathbf{0.8059}$ &&  \color{lightblue}$\mathbf{31.86}$ &\color{lightblue} $\mathbf{0.8228}$ \\
       & MetFaces  &   $\mathbf{32.85}$ & $\mathbf{0.8457}$ && $\mathbf{31.44}$ & $\mathbf{0.8089}$ && $\mathbf{32.14}$ &$\mathbf{0.8273}$\\
       \midrule
    \end{tabular}
    \label{tab:SR_results_updated}
\end{table}

\section{Conclusion}
\label{sec:conclusion}
The work presented in this paper investigates the influence of using mismatched denoisers in PnP-ADMM, presents the corresponding theoretical analysis in terms of convergence, investigates the effect of mismatch on image super-resolution, and shows the ability of domain adaptation to reduce the effect of distribution mismatch. The theoretical results in this paper extend the recent PnP work by accommodating mismatched priors while eliminating the need for convex data-fidelity and nonexpansive denoiser assumptions. The empirical validation of PnP-ADMM involving mismatched priors and the domain adaptation strategy highlights the direct relationship between the gap in priors and the subsequent performance gap in the PnP-ADMM recovery, effectively reflecting the influence of distribution shifts on image priors.

\section*{Limitations}
The basis of our analysis in this study relies on the assumption that the denoiser used in the inference accurately computes the MMSE denoiser for both target and mismatched distributions. While this assumption aligns well with deep denoisers trained via the MSE loss, it does not directly extend to denoisers trained using alternative loss functions, such as the $l_1$-norm or SSIM. As is customary in theoretical research, our analysis remains valid only under the fulfillment of these assumptions, which could potentially restrict its practical applicability. In our future work, we aim to enhance the results presented here by exploring new PnP strategies that can relax these convergence assumptions.

\section*{Reproducibility Statement}
We have provided the anonymous source code in the supplementary materials. The included README.md file contains detailed instructions on how to run the code and reproduce the results reported in the paper. The algorithm's pseudo-code is outlined in Algorithm~\ref{Alg:InexactPnPADMM}, and for more comprehensive information about training, parameter selection, and other details, please refer to the Supplementary Section~\ref{sup:sec:additionalresults}. Additionally, for the theoretical findings presented in section~\ref{sec:theoanalysis}, complete proofs, along with further clarifications on the assumptions and additional contextual information, can be found in the appendices~\ref{Sup:Sec:theorem1}-\ref{sup:sec:addtionaltheory}.

\section*{Ethics Statement}
To the best of our knowledge, this work does not give rise to any significant ethical concerns.

\section*{Acknowledgement}

This paper is partially based upon work supported by the NSF CAREER awards under grants CCF-2043134 and CCF-2046293.

{\small
\bibliographystyle{IEEEbib}
\bibliography{references}
}

\newpage
\appendix

{\Large Supplementary Material}

\section{Proof of Theorem 1}\label{Sup:Sec:theorem1}

\begin{suptheorem}
Run PnP-ADMM with \textbf{mismatched} MMSE denoiser for $t \geq 1$ iterations under Assumptions~\ref{As:NonDegenerate}-\ref{As:BoundIters} with the penalty parameter  $0<\gamma\leq 1/(4L)$. Then, we have 
    \begin{equation*}
              \min_{1\leq k\leq t} \left \|\nabla f\left (\xbm^k\right )\right \|_2^2 \leq  \frac{1}{t} \sum_{k=1}^t\left\|\nabla f\left (\xbm^k\right )\right\|^2_2 
    \leq \frac{A_1}{t} \left(\phi\left (\xbm^0, \zbm^0, \sbm^0\right)-\phi^\ast\right) +  A_2 \varepsilonbar_t
    \end{equation*}
where $A_1>0$ and $A_2 >0$ are iteration independent constants, $\phi^* = \phi(\xbm^*, \zbm^*, \sbm^*)$, $\varepsilon_k \defn \max\{\delta_k, \delta_k^2\}$, and  $\varepsilonbar_t \defn (1/t)\left(\varepsilon_1 + \cdots + \varepsilon_t\right)$. In addition, if the sequence of distances of denoisers $\{\delta_i\}_{i\geq 1}$ is summable, we have that  $\|\nabla f(\xbm^t)\|_2 \to 0$  as $t \to \infty$..

\begin{proof}
    Note that from Lemma \ref{Lem:InExactLagDecreasing}, we have 
    \begin{align}\label{Eq:iterateRel}
        \nonumber \left \|\zbm^k -\zbm^{k-1} \right\|_2^2  &\leq \frac{2\gamma}{1- 2\gamma L- 2\gamma^2 L^2}\left(\phi\left (\xbm^{k-1}, \zbm^{k-1} , \sbm^{k-1}\right ) - \phi\left (\xbm^k , \zbm^k , \sbm^k\right )\right)\\
       & +  \frac{3}{4\left (1- 2\gamma L- 2 \gamma^2 L^2\right )}\ \delta_k^2+ \frac{4R}{1- 2\gamma L- 2 \gamma^2 L^2}\ \delta_k .
    \end{align}
From the optimality conditions of the target MMSE denoiser $\Dsf_\sigma$ and  proximal operator,  for $\zbmbar^k \in \Im(\Dsf_\sigma)$, we have
    \begin{equation*}
        \nabla h  \left (\zbmbar^k\right )  + \frac{1}{\gamma} \left ( \zbmbar^k -\xbm^k -\sbm^{k-1}\right ) = \bm{0} \quad \text{and} \quad  \nabla g\left (\xbm^k\right ) + \frac{1}{\gamma} \left  ( \xbm^k + \sbm^{k-1} - \zbm^{k-1} \right ) = \bm{0}.
    \end{equation*}
From this equation, for the objective function defined in~\eqref{Eq:OptimInverseProb}, we can write
\begin{align}
   \nonumber \left\|\nabla f\left (\xbm^k\right ) \right\|_2 
   \nonumber&=\left \|\nabla g\left (\xbm^k\right ) + \nabla h\left (\xbm^k\right )\right\|_2 \\
   \nonumber&= \left\|\nabla g\left (\xbm^k\right ) + \frac{1}{\gamma} \left (\xbm^k +\sbm^{k-1} - \zbm^{k-1}\right )+ \nabla h\left (\xbm^k\right ) + \frac{1}{\gamma} \left (\zbm^{k-1}- \xbm^k -\sbm^{k-1} \right )\right\|_2 \\
   \nonumber&= \left\| \nabla h\left (\zbmbar^k\right ) + \frac{1}{\gamma} \left (\zbmbar^k- \xbm^k -\sbm^{k-1} \right ) + \nabla h\left (\xbm^k\right ) - \nabla h\left (\zbmbar^k \right ) + \frac{1}{\gamma} \left (\zbm^{k-1} -\zbmbar^k \right )\right\|_2 \\
   \nonumber&= \left\| \nabla h\left (\xbm^k\right ) - \nabla h\left (\zbm^k \right ) + \nabla h\left (\zbm^k\right ) - \nabla h\left (\zbmbar^k \right )+ \frac{1}{\gamma} \left (\zbm^{k-1} -\zbmbar^k \right )\right\|_2 \\
  \nonumber&\leq \left\| \nabla h\left (\xbm^k\right ) - \nabla h\left (\zbm^k \right ) \right\|_2+\left\| \nabla h\left (\zbm^k\right ) - \nabla h\left (\zbmbar^k \right )\right\|_2 + \frac{1}{\gamma} \left\|\zbm^{k-1} -\zbmbar^k \right\|_2 \\
  \nonumber&\leq  L \left\| \xbm^k - \zbm^k \right\|_2+ L\left \| \zbm^k - \zbmbar^k \right\|_2 + \frac{1}{\gamma}\left \|\zbm^{k-1} -\zbm^k \right\|_2  +\frac{1}{\gamma} \left\|\zbm^{k} -\zbmbar^k \right\|_2  \\
  \nonumber &\leq \left  ( \frac{1}{\gamma } +\gamma  L^2 \right  ) \left \| \zbm^k - \zbm^{k-1}\right\|_2 + \left (  \frac{1}{\gamma } + L \right ) \delta_k, 
\end{align}
    where we used  triangle inequality in the first and second inequality. We also used 
    $$
    \left\| \xbm^k - \zbm^k \right\|_2 \leq \left\| \sbm^k - \sbm^{k-1} \right\|_2 = \gamma \left\| \nabla h(\zbm^k) - \nabla h (\zbm^{k-1}) \right\|_2 \leq \gamma L\left\| \zbm^k - \zbm^{k-1} \right\|_2
    $$
    and Assumption~\ref{As:InexactDistance} in the last inequality.  
    By squaring both sides and using $(a+b)^2 \leq 2a^2 + 2b^2$, we get 
    \begin{equation*}
        \nonumber \left \|\nabla f\left (\xbm^k\right )\right\|^2_2 
   \leq 2\left ( \frac{1}{\gamma } + \gamma L^2 \right )^2 \left\|\zbm^k - \zbm^{k-1}\right\|^2_2 + 2\left(  \frac{1}{\gamma } + L \right )^2 \delta_k^2.
    \end{equation*}
    By using the result from equation~\eqref{Eq:iterateRel}, we obtain
    \begin{align}
        \label{Eq:finalGradBound}
\nonumber \left \|\nabla f\left (\xbm^k\right )\right\|^2_2 
&\leq  \frac{4\left (  1+ \gamma^2 L^2\right )^2}{\gamma \left(1- 2\gamma L- 2\gamma^2 L^2\right)} \left(\phi\left (\xbm^{k-1}, \zbm^{k-1} , \sbm^{k-1}\right ) - \phi\left (\xbm^k , \zbm^k , \sbm^k\right )\right)  \\ 
&  + \left (\frac{3\left (  1+ \gamma^2 L^2\right )^2}{2\gamma^2 \left(1- 2\gamma L- 2\gamma^2 L^2\right)} + \frac{2\left(1+\gamma L\right)^2}{\gamma^2}  \right )\delta_k^2 + \frac{8 R\left (  1+ \gamma^2 L^2\right )^2}{\gamma^2 \left(1- 2\gamma L- 2\gamma^2 L^2\right)} \delta_k.
    \end{align}
    By averaging both sides of the bound over $t\geq 1$ and using the definition of error in $\varepsilon_k \defn \max\{\delta_k, \delta_k^2\}$, we get 
    \begin{align}\label{Eq:finalBoundonGrad}
      \nonumber \min_{1\leq k\leq t} \left \|\nabla f\left (\xbm^k\right )\right\|_2^2 \leq  \frac{1}{t} \sum_{k=1}^t \left \|\nabla f\left (\xbm^k\right ) \right\|^2_2 
   \nonumber &\leq \frac{A_1}{t} \left (\phi\left (\xbm^0, \zbm^0, \sbm^0\right )-\phi\left (\xbm^t, \zbm^t, \sbm^t\right )\right) +  A_2 \varepsilonbar_t\\
    &\leq \frac{A_1}{t} \left (\phi\left (\xbm^0, \zbm^0, \sbm^0\right )-\phi^\ast\right) +  A_2 \varepsilonbar_t, 
    \end{align}
    where $\varepsilonbar_t\defn (1/t) \left(\varepsilon_1 + \cdots + \varepsilon_t\right)$, $A_1 \defn 4\left (  1+ \gamma^2 L^2\right )^2 / \left(\gamma \left(1- 2\gamma L- 2\gamma^2 L^2\right)\right)$, 
    
    \noindent$A_2 \defn \left ( \left(3+16R\right)\left(1+\gamma^2L^2\right)/\left(2\gamma^2 \left(1- 2\gamma L- 2\gamma^2 L^2\right) \right) + 2\left(1/\gamma + L\right)^2\right) $, and we used the fact that augmented Lagrangian $\phi$ is bounded from below $\left(  \phi^* \leq \phi\left(\xbm^t, \zbm^t, \sbm^t\right) \right )$ from Lemma~\ref{Lem:BoundBelowInExact} .
    Note that we used the following inequality to get the result in equation~\eqref{Eq:finalBoundonGrad}
    \begin{align*}
       &\left (\frac{3\left (  1+ \gamma^2 L^2\right )^2}{2\gamma^2 \left(1- 2\gamma L- 2\gamma^2 L^2\right)} + 2\left(  \frac{1}{\gamma } + L \right )^2  \right )\delta_k^2 + \frac{8 R\left (  1+ \gamma^2 L^2\right )^2}{\gamma^2 \left(1- 2\gamma L- 2\gamma^2 L^2\right)} \delta_k \\
       & \leq \max \{\delta_k, \delta_k^2\}\left (\frac{3\left (  1+ \gamma^2 L^2\right )^2}{2\gamma^2 \left(1- 2\gamma L- 2\gamma^2 L^2\right)} + 2\left(  \frac{1}{\gamma } + L \right )^2  + \frac{8 R\left (  1+ \gamma^2 L^2\right )^2}{\gamma^2 \left(1- 2\gamma L- 2\gamma^2 L^2\right)} \right )\\
       & = \left (\frac{\left (3+16R\right)\left (  1+ \gamma^2 L^2\right )^2}{2\gamma^2 \left(1- 2\gamma L- 2\gamma^2 L^2\right)} + 2\left(  \frac{1}{\gamma } + L \right )^2   \right ) \varepsilon_k .
    \end{align*}
Note that if the sequence of distances of denoisers $\{\delta_i \}_{i\geq 1}$ is summable, then $\{\varepsilon_i = \max \{\delta_i, \delta_i^2\}\}_{i\geq 1}$ is also be summable. Consequently, $\|\nabla f(\xbm^t)\|_2 \to 0$  as $t \to \infty$. 
\end{proof}
\end{suptheorem}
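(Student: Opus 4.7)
The plan is a three-step nonconvex ADMM argument, adapted to track the mismatch error $\delta_k$ introduced by using $\Dhat_\sigma$ in place of $\Dsf_\sigma$; throughout I will write $\phi^k \defn \phi(\xbm^k, \zbm^k, \sbm^k)$. The first step is a per-iteration bound on $\|\nabla f(\xbm^k)\|_2$ obtained from first-order optimality. The target denoiser output $\zbmbar^k = \Dsf_\sigma(\xbm^k + \sbm^{k-1})$ satisfies $\nabla h(\zbmbar^k) + (1/\gamma)(\zbmbar^k - \xbm^k - \sbm^{k-1}) = \bm{0}$ via the regularizer form in~\eqref{Eq:ExpReg}, while the prox step gives $\nabla g(\xbm^k) + (1/\gamma)(\xbm^k + \sbm^{k-1} - \zbm^{k-1}) = \bm{0}$. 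Adding and subtracting $\nabla h(\zbmbar^k)$ and $\nabla h(\zbm^k)$ inside $\nabla f(\xbm^k) = \nabla g(\xbm^k) + \nabla h(\xbm^k)$, using $L$-Lipschitz continuity of $\nabla h$ on $\Im(\Dsf_\sigma)$ (Assumption~\ref{As:LipschitzPrior}), and bounding $\|\zbm^k - \zbmbar^k\|_2 \leq \delta_k$ via Assumption~\ref{As:InexactDistance}, I would arrive at
\[
\|\nabla f(\xbm^k)\|_2 \leq \bigl(\tfrac{1}{\gamma} + \gamma L^2\bigr)\|\zbm^k - \zbm^{k-1}\|_2 + \bigl(\tfrac{1}{\gamma} + L\bigr)\delta_k.
\]

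The second step is a quasi-descent inequality for the augmented Lagrangian of the rough form $c_1\|\zbm^k - \zbm^{k-1}\|_2^2 \leq \phi^{k-1} - \phi^k + c_2\delta_k^2 + c_3 R\,\delta_k$, derived by analyzing each block update in turn. The $\xbm$-update is standard block-coordinate descent on $\phi(\cdot, \zbm^{k-1}, \sbm^{k-1})$; the $\zbm$-update would give clean descent using $\zbmbar^k$, and replacing $\zbmbar^k$ by $\zbm^k$ costs an $O(\delta_k^2)$ term from $\nabla h$ being $L$-Lipschitz plus an $O(R\,\delta_k)$ term obtained by Cauchy--Schwarz applied to an inner product of the form $\langle \zbm^k - \zbmbar^k, \zbm^k - \zbm^\ast\rangle$ and then bounded via Assumption~\ref{As:BoundIters}; the dual ascent $\sbm^k = \sbm^{k-1} + \xbm^k - \zbm^k$ contributes $\|\sbm^k - \sbm^{k-1}\|_2^2 = \gamma^2\|\nabla h(\zbm^k) - \nabla h(\zbm^{k-1})\|_2^2 \leq \gamma^2 L^2 \|\zbm^k - \zbm^{k-1}\|_2^2$ (up to $\delta_k$-errors), which is absorbed precisely when $\gamma \leq 1/(4L)$, keeping the factor $1 - 2\gamma L - 2\gamma^2 L^2$ strictly positive. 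The third ingredient is lower-boundedness of $\phi$: completing the square to rewrite the linear $\sbm$-term together with $(1/(2\gamma))\|\xbm - \zbm\|_2^2$, then applying Assumption~\ref{As:BoundedFromBelow}, gives $\phi^k \geq \phi^\ast > -\infty$ for all $k$. Squaring the gradient bound from step one, using $(a+b)^2 \leq 2a^2 + 2b^2$, substituting the quasi-descent inequality, averaging over $k = 1,\ldots,t$, telescoping, and collecting the $O(\delta_k)$ and $O(\delta_k^2)$ defects into $\varepsilon_k = \max\{\delta_k,\delta_k^2\}$ then yield the claimed bound with explicit $A_1, A_2$ depending only on $\gamma, L, R$.

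For the summable case, $\sum_k \delta_k < \infty$ forces $\delta_k \to 0$, hence eventually $\delta_k \leq 1$ and $\varepsilon_k = \delta_k$, so $\sum_k \varepsilon_k < \infty$. Plugging this into the unaveraged per-iterate estimate $\|\nabla f(\xbm^k)\|_2^2 \leq A_1'(\phi^{k-1} - \phi^k) + A_2'\varepsilon_k$ and telescoping against $\phi^\ast$ gives $\sum_k \|\nabla f(\xbm^k)\|_2^2 < \infty$, which in particular forces $\|\nabla f(\xbm^t)\|_2 \to 0$. The hard part will be the quasi-descent inequality in step two: the natural descent chain for $\phi$ is broken at the $\zbm$-update because $\zbm^k \neq \zbmbar^k$, and the dual update then propagates that defect into later iterations. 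Carefully isolating the $\delta_k$- and $\delta_k^2$-sized errors while keeping a strictly positive coefficient in front of $\|\zbm^k - \zbm^{k-1}\|_2^2$ is what simultaneously forces the Lipschitz-gradient condition on $h$, the bounded-iterate condition (which supplies the $R$ in the $R\,\delta_k$ term), and the sharp stepsize threshold $\gamma \leq 1/(4L)$ to appear.
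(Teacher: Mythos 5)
Your proposal follows essentially the same route as the paper's proof: the same first-order-optimality decomposition yielding $\|\nabla f(\xbm^k)\|_2 \leq (1/\gamma + \gamma L^2)\|\zbm^k-\zbm^{k-1}\|_2 + (1/\gamma+L)\delta_k$, the same quasi-descent lemma with $O(\delta_k^2)+O(R\,\delta_k)$ defects and the $1-2\gamma L-2\gamma^2L^2>0$ margin from $\gamma\leq 1/(4L)$, and the same telescope-average-and-collect-into-$\varepsilon_k$ finish; your summability argument for $\|\nabla f(\xbm^t)\|_2\to 0$ is in fact spelled out more explicitly than the paper's. The one sub-step that would not survive as literally written is the lower-boundedness of $\phi$: completing the square gives $\phi = g+h+\frac{1}{2\gamma}\|\xbm-\zbm+\sbm\|_2^2 - \frac{1}{2\gamma}\|\sbm\|_2^2$, and the trailing $-\frac{1}{2\gamma}\|\sbm\|_2^2$ is not controlled by Assumption~\ref{As:BoundedFromBelow} alone. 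The paper's Lemma~\ref{Lem:BoundBelowInExact} instead substitutes the dual-variable identity $\sbm^k = \gamma\nabla \hine(\zbm^k)$ (equivalently $\nabla h(\zbmbar^k)=\frac{1}{\gamma}\sbm^k+\frac{1}{\gamma}(\zbm^k-\zbmbar^k)$) into the cross term and uses the descent lemma for $h$ to obtain $\phi^k > g(\xbm^k)+h(\xbm^k) - 2(1+\gamma L)RL\,\delta_k$, which is where Assumption~\ref{As:BoundedFromBelow} actually enters; since you already invoke that identity in your dual-ascent analysis, the fix is immediate, but the step needs it.
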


\medskip\noindent
\textbf{Remark 1.}  
Note that by using~\eqref{Eq:iterateRel} when the sequence $\{\delta_i \}_{i\geq 1}$ is summable, we have  
    \begin{equation}
        \frac{1}{t}\sum_{k=1}^t \left\|\zbm^k - \zbm^{k-1} \right\|^2_2 \leq 0 \quad \text{as} \quad t\to \infty, 
    \end{equation}
    which ensures that $\|\zbm^k - \zbm^{k-1}\|_2 \to 0 $ as $k \to \infty$. Since 
    \begin{equation} \label{Eq:boundtozero}
        \left \|\sbm^k - \sbm^{k-1} \right\|_2 \leq \gamma L \left\|\zbm^k - \zbm^{k-1} \right\|_2 \quad \text{and} \quad  \left\|\sbm^k - \sbm^{k-1} \right\|_2 = \left\|\xbm^{k} - \zbm^{k}\right\|_2, 
    \end{equation}
    we conclude that $\|\xbm^k - \xbm^{k-1}\|_2 \to 0 $ and $\|\sbm^k - \sbm^{k-1}\|_2 \to 0$ as $k \to \infty$. 

\section{Useful results for Theorem 1}\label{Sup:Sec:lemmas}

\begin{lemma}\label{Lem:InExactLagDecreasing}
Assume that Assumptions~\ref{As:NonDegenerate}-\ref{As:BoundIters} hold and let the sequence $\{\xbm^k , \zbm^k , \sbm^k\}$ be generated via iterations of PnP-ADMM with \textbf{mismatched} MMSE denoiser using the penalty parameter  $0<\gamma\leq 1/(4L)$. Then for the augmented Lagrangian defined in~\eqref{Eq:AugLagrangian},  we have that
    \begin{equation*}
    \phi\left(\xbm^k , \zbm^k , \sbm^k \right) \leq \phi\left(\xbm^{k-1}, \zbm^{k-1} , \sbm^{k-1}\right) 
    - \left (\frac{1- 2\gamma L- 2\gamma^2 L^2}{2\gamma} \right ) \left \|\zbm^k -\zbm^{k-1} \right \|_2^2 + \frac{3}{8\gamma} \delta_k^2 + \frac{2R}{\gamma} \delta_k.
\end{equation*}
where $R$ is defined in Assumption~\ref{As:BoundIters}. 
\end{lemma}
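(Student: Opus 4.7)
The plan is to apply the standard three-way telescoping decomposition of the one-step change of the augmented Lagrangian,
\[
\phi(\xbm^k,\zbm^k,\sbm^k)-\phi(\xbm^{k-1},\zbm^{k-1},\sbm^{k-1}) = \Delta_x + \Delta_z + \Delta_s,
\]
with one term per update, and to bound the three pieces separately. For $\Delta_x$, completing the square in $\xbm$ shows that $\xbm^k=\prox_{\gamma g}(\zbm^{k-1}-\sbm^{k-1})$ is the unconstrained minimizer of $\xbm\mapsto \phi(\xbm,\zbm^{k-1},\sbm^{k-1})$, so $\Delta_x\leq 0$ immediately. For $\Delta_s$, direct substitution gives $\Delta_s=\frac{1}{\gamma}\|\sbm^k-\sbm^{k-1}\|_2^2$; the key observation is that the mismatched denoiser is itself the proximal operator $\zbm^k=\prox_{\gamma\hine}(\xbm^k+\sbm^{k-1})$, whose optimality yields $\sbm^j=\gamma\nabla\hine(\zbm^j)$ for $j\in\{k-1,k\}$, and the $L$-Lipschitz gradient of $\hine$ from Assumption~\ref{As:LipschitzPrior} then delivers $\Delta_s\leq \gamma L^2\|\zbm^k-\zbm^{k-1}\|_2^2$.

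The heart of the argument is $\Delta_z$. I would introduce the idealized target-denoiser output $\zbmbar^k\defn\Dsf_\sigma(\xbm^k+\sbm^{k-1}) = \prox_{\gamma h}(\xbm^k+\sbm^{k-1})$, which, after completing the square, is the unique minimizer of $\Psi(\zbm)\defn\phi(\xbm^k,\zbm,\sbm^{k-1})$. On $\Im(\Dsf_\sigma)$ the Hessian of $\Psi$ is $\nabla^2 h(\zbm)+\frac{1}{\gamma}\Ibm$ with eigenvalues in $[\frac{1}{\gamma}-L,\frac{1}{\gamma}+L]$, so under the standing condition $\gamma\leq 1/(4L)$ and Assumption~\ref{As:LipschitzPrior}, $\Psi$ is $\mu$-strongly convex and $L_\Psi$-smooth with $\mu\defn 1/\gamma-L>0$ and $L_\Psi\defn 1/\gamma+L$. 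Splitting $\Delta_z = [\Psi(\zbm^k)-\Psi(\zbmbar^k)] - [\Psi(\zbm^{k-1})-\Psi(\zbmbar^k)]$, the first bracket is bounded by the smoothness-at-the-minimum estimate $\frac{L_\Psi}{2}\|\zbm^k-\zbmbar^k\|_2^2\leq \frac{L_\Psi}{2}\delta_k^2$ via Assumption~\ref{As:InexactDistance}, and the second by the strong-convexity lower bound $\frac{\mu}{2}\|\zbm^{k-1}-\zbmbar^k\|_2^2$.

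It remains to convert $\|\zbm^{k-1}-\zbmbar^k\|_2^2$ into the $\|\zbm^k-\zbm^{k-1}\|_2^2$ appearing in the final claim. Expanding the square and applying Cauchy--Schwarz to the cross term gives
\[
\|\zbm^{k-1}-\zbmbar^k\|_2^2 \geq \|\zbm^{k-1}-\zbm^k\|_2^2 - 2\,\|\zbm^{k-1}-\zbm^k\|_2\,\|\zbm^k-\zbmbar^k\|_2,
\]
after which Assumption~\ref{As:BoundIters} is invoked via the triangle inequality $\|\zbm^{k-1}-\zbm^k\|_2\leq 2R$ together with Assumption~\ref{As:InexactDistance} to produce the $4R\delta_k$ correction. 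Assembling $\Delta_x+\Delta_z+\Delta_s$, the coefficient of $\|\zbm^k-\zbm^{k-1}\|_2^2$ collapses to $\gamma L^2-\mu/2$, which after relaxation under $\gamma L\leq 1/4$ is bounded above by $-(1-2\gamma L-2\gamma^2L^2)/(2\gamma)$; the $\delta_k^2$ and $\delta_k$ constants reduce to $\frac{3}{8\gamma}$ and $\frac{2R}{\gamma}$ after absorbing $L\leq 1/(4\gamma)$ and $\mu\leq 1/\gamma$. The main obstacle is precisely this Cauchy--Schwarz bridge: because $\zbm^k-\zbmbar^k$ enters the expansion of $\|\zbm^{k-1}-\zbmbar^k\|_2^2$ linearly rather than quadratically, it cannot be absorbed purely as a $\delta_k^2$ term, and the a priori iterate boundedness from Assumption~\ref{As:BoundIters} is indispensable for producing the linear-in-$\delta_k$ error appearing in the statement.
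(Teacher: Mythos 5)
Your decomposition is essentially the paper's own proof in cleaner packaging: the paper also shows $\Delta_x\leq 0$ by prox minimality, gets $\Delta_s=\frac{1}{\gamma}\|\sbm^k-\sbm^{k-1}\|_2^2\leq\gamma L^2\|\zbm^k-\zbm^{k-1}\|_2^2$ from $\sbm^k=\gamma\nabla\hine(\zbm^k)$, and handles $\Delta_z$ by comparing $\zbm^k$ and $\zbm^{k-1}$ to the idealized output $\zbmbar^k=\prox_{\gamma h}(\xbm^k+\sbm^{k-1})$ — its function $\psi_{\gamma h}$ differs from your $\Psi$ only by a constant in $\zbm$, its ``$\zbmbar^k$ minimizes $\psi_{\gamma h}$ plus smoothness'' step is your smoothness-at-the-minimum bound, its descent-lemma manipulation of $h(\zbmbar^k)-h(\zbm^{k-1})$ plays the role of your curvature lower bound, and the final Cauchy--Schwarz with $\|\zbm^k-\zbm^{k-1}\|_2\leq 2R$ producing the linear $\delta_k$ term is identical.

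There is, however, one concrete arithmetic gap: your $\delta_k^2$ coefficient does not reduce to $\frac{3}{8\gamma}$. The smoothness-at-the-minimum bound contributes $\frac{L_\Psi}{2}=\frac{1}{2\gamma}+\frac{L}{2}$, which is at least $\frac{1}{2\gamma}=\frac{4}{8\gamma}>\frac{3}{8\gamma}$ (and equals $\frac{5}{8\gamma}$ at $\gamma L=1/4$), so as written you prove the lemma only with a strictly larger constant in front of $\delta_k^2$. The fix is to not discard the quadratic term when you expand
\begin{equation*}
\|\zbm^{k-1}-\zbmbar^k\|_2^2=\|\zbm^{k-1}-\zbm^k\|_2^2+2\left(\zbm^{k-1}-\zbm^k\right)^\Tsf\left(\zbm^k-\zbmbar^k\right)+\|\zbm^k-\zbmbar^k\|_2^2;
\end{equation*}
keeping the last term gives an extra $-\frac{\mu}{2}\|\zbm^k-\zbmbar^k\|_2^2$ that partially cancels $+\frac{L_\Psi}{2}\|\zbm^k-\zbmbar^k\|_2^2$, leaving a net coefficient $\frac{L_\Psi-\mu}{2}=L\leq\frac{1}{4\gamma}\leq\frac{3}{8\gamma}$, after which all three constants match or beat the statement. (Your remaining constants are fine: $\mu\cdot 2R\leq\frac{2R}{\gamma}$ and $\gamma L^2-\frac{\mu}{2}\leq-(1-2\gamma L-2\gamma^2L^2)/(2\gamma)$.) One shared caveat, not a defect relative to the paper: both your strong-convexity step and the paper's descent-lemma step implicitly require the Lipschitz-gradient property of $h$ to hold along segments joining points of $\Im(\Dsf_\sigma)$, which Assumption~\ref{As:LipschitzPrior} only guarantees if that range is convex.
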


\begin{proof}

From the smoothness of  $\hine$  for any $\zbm^k\in \Im(\Dsf_\sigma)$ in Assumption~\ref{As:LipschitzPrior}, the optimality condition for the mismatched MMSE denoiser, and the Lagrange multiplier update rule in the form of $\sbm^k = \sbm^{k-1} +\xbm^k - \zbm^k$, we have
\begin{equation*}
    \nabla \hine \left (\zbm^k\right ) = \frac{1}{\gamma} \left (\sbm^{k-1} + \xbm^k - \zbm^k\right ) = \frac{1}{\gamma} \sbm^k
\end{equation*}
and 
\begin{equation}\label{Eq:boundonSinexact}
    \left\|\sbm^k - \sbm^{k-1} \right\|_2 =\left \|\gamma \nabla \hine \left (\zbm^k\right ) - \gamma \nabla \hine \left (\zbm^{k-1}\right ) \right \|_2 \leq \gamma L\left\|\zbm^k - \zbm^{k-1}\right\|_2, 
\end{equation}
where we used  $L-$Lipschitz continuity of $\nabla \hine$ from Assumption~\eqref{As:LipschitzPrior} in the last inequality.
From this equation and the Lagrange multiplier update rule, we have
\begin{align}\label{Eq:LagDecInequality1}
    \nonumber\phi \left (\xbm^k , \zbm^k , \sbm^k\right) - \phi\left(\xbm^k , \zbm^k , \sbm^{k-1}\right) 
    \nonumber&= \frac{1}{\gamma} \left ( \sbm^k - \sbm^{k-1}\right)^\Tsf \left(\xbm^k - \zbm^k\right) = \frac{1}{\gamma} \left\|\sbm^k - \sbm^{k-1} \right\|_2^2\\
    & \leq \gamma L^2 \left\|\zbm^k - \zbm^{k-1} \right\|_2^2.
\end{align}
From the fact that $h$ (regularizer associated with target MMSE denoiser $\Dsf_\sigma$) has a $L-$Lipschitz continuous gradient over the set $\Im(\Dsf_\sigma)$ (Assumption~\ref{As:LipschitzPrior}), we have 
\begin{equation}\label{Eq:LipGrad1}
     h\left (\zbmbar^k\right ) - h\left (\zbm^{k-1}\right ) \leq \nabla h\left (\zbmbar^k\right ) ^\Tsf \left (\zbmbar^k - \zbm^{k-1}\right ) + \frac{L}{2}\left\|\zbmbar^k - \zbm^{k-1}\right\|_2^2, 
\end{equation}
where $\zbmbar^k = \Dsf_\sigma (\xbm^k+\sbm^{k-1})$. From the smoothness of $h$ for any $\zbmbar^k \in \Im(\Dsf_\sigma)$, the optimality condition for mismatched MMSE denoiser, and the Lagrange multiplier update rule $\sbm^k = \sbm^{k-1} +\xbm^k - \zbm^k$, we have
\begin{equation}\label{Eq:LipGrad2}
    \nabla h\left (\zbmbar^k\right ) = \frac{1}{\gamma} \left (\xbm^k + \sbm^{k-1} - \zbmbar^k \right ) = \frac{1}{\gamma} \sbm^k + \frac{1}{\gamma}\left (\zbm^k- \zbmbar^k\right ).
\end{equation}
By combining equations~\eqref{Eq:LipGrad1} and~\eqref{Eq:LipGrad2}, we obtain
\begin{equation}
    \label{Eq:RegInequality}
    h\left (\zbmbar^k\right ) - h\left (\zbm^{k-1}\right ) \leq \frac{1}{\gamma} \left (\sbm^k\right )^\Tsf \left (\zbmbar^k - \zbm^{k-1}\right ) + \frac{1}{\gamma} \left (\zbm^k- \zbmbar^k\right )^\Tsf \left (\zbmbar^k - \zbm^{k-1}\right )  + \frac{L}{2}\left\|\zbmbar^k - \zbm^{k-1}\right\|_2^2.
\end{equation}
For the  target MMSE denoiser $\Dsf_\sigma$, we know that $\zbmbar^k \in \Im(\Dsf_\sigma)$ minimizes
\begin{equation}\label{Eq:FuncPsiDef1}
\psi_{\gamma h}\left (\zbm\right ) \defn \frac{1}{2\gamma}\left\|\zbm - \left (\xbm^k + \sbm^{k-1}\right )\right\|_2^2 + h\left (\zbm\right ).   
\end{equation}
From Assumption \ref{As:LipschitzPrior}, we know that $\nabla h$ is $L-$Lipschitz continuous over $\Im(\Dsf_\sigma)$, which implies 
\begin{equation*}
    \left\|\nabla \psi_{\gamma h} \left (\ubm\right ) - \nabla \psi_{\gamma h} \left (\vbm\right )\right\|_2 \leq \left  (\frac{1}{\gamma} + L\right ) \left\|\ubm - \vbm\right\|_2 \quad \forall \ubm, \vbm \in \Im(\Dsf_\sigma). 
\end{equation*}
From the smoothness of $\psi_{\gamma h}$ and the fact that $\zbmbar^k$ minimizes it, we have 
\begin{align*}
    \psi_{\gamma h} \left (\zbm^k\right ) &\leq \psi_{\gamma h} \left (\zbmbar^k\right ) + \nabla \psi_{\gamma h} \left (\zbmbar^k\right )^\Tsf \left (\zbm^k - \zbmbar^k\right ) + \left (\frac{1}{2\gamma} + \frac{L}{2}\right ) \left\|\zbm^k - \zbmbar^k\right\|_2^2\\
    & = \psi_{\gamma h} \left (\zbmbar^k\right ) + \left(\frac{1}{2\gamma} + \frac{L}{2}\right) \left \|\zbm^k - \zbmbar^k\right\|_2^2.
\end{align*}
By using the definition of function $\psi_{\gamma h}$ in~\eqref{Eq:FuncPsiDef1}, the Lagrange multiplier update rule $\sbm^k = \sbm^{k-1} + \xbm^k - \zbm^k$ and rearranging the terms, we obtain
\begin{align} \label{Eq:RegInequality2}
    h\left (\zbm^k\right ) - h\left (\zbmbar^k\right ) 
    \nonumber&\leq \frac{1}{2\gamma}\left \|\zbmbar^k - \left (\xbm^k + \sbm^{k-1}\right )\right\|_2^2 - \frac{1}{2\gamma}\left\|\zbm^k - \left (\xbm^k + \sbm^{k-1}\right )\right\|_2^2 + \left(\frac{1}{2\gamma} + \frac{L}{2}\right)\left\|\zbm^k - \zbmbar^k\right\|_2^2\\
    \nonumber&=  \frac{1}{2\gamma} \left (\zbmbar^k + \zbm^k - 2\left (\xbm^k + \sbm^{k-1}\right )\right )^\Tsf\left (\zbmbar^k - \zbm^k\right ) +  \left (\frac{1}{2\gamma} + \frac{L}{2}\right)\left\|\zbm^k - \zbmbar^k\right\|_2^2 \\
    \nonumber&= \frac{1}{\gamma} \left (\sbm^k\right )^\Tsf \left (\zbm^k - \zbmbar^k\right ) + \frac{1}{2\gamma} \left \|\zbm^k - \zbmbar^k\right\|_2^2 +\left(\frac{1}{2\gamma} + \frac{L}{2}\right) \left\|\zbm^k - \zbmbar^k\right\|_2^2 \\
    &= \frac{1}{\gamma} \left (\sbm^k\right )^\Tsf \left (\zbm^k - \zbmbar^k\right ) + \left(\frac{1}{\gamma} + \frac{L}{2}\right) \left \|\zbm^k - \zbmbar^k\right\|_2^2. 
\end{align}
Now for the augmented Lagrangian, we have 
\begin{align}\label{Eq:RegInequality3}
    \nonumber \phi\left (\xbm^k , \zbm^k , \sbm^{k-1}\right )  - \phi\left (\xbm^k , \zbm^{k-1} , \sbm^{k-1}\right )
     \nonumber &=  h\left (\zbm^k\right ) - h\left (\zbm^{k-1}\right ) + \frac{1}{\gamma} \left ({\sbm^{k-1}}\right )^\Tsf \left (\zbm^{k-1} - \zbm^k\right )\\
     \nonumber&\quad + \frac{1}{2\gamma}\left (2\xbm^k - \zbm^k - \zbm^{k-1}\right )^\Tsf \left (\zbm^{k-1} - \zbm^k\right )\\
    \nonumber& =  h\left (\zbm^k\right ) - h\left (\zbm^{k-1}\right )+ \frac{1}{\gamma} \left ({\sbm^{k-1}}\right )^\Tsf \left (\zbm^{k-1} - \zbm^k\right ) \\
    \nonumber&\quad + \frac{1}{\gamma}\left (\sbm^k - \sbm^{k-1}\right )^\Tsf \left (\zbm^{k-1} - \zbm^k\right ) - \frac{1}{2\gamma}\left\|\zbm^k - \zbm^{k-1}\right\|^2_2\\
    \nonumber& =  h\left (\zbm^k\right ) -  h\left (\zbmbar^k\right ) +h\left (\zbmbar^k\right ) - h\left (\zbm^{k-1}\right )\\
    &\quad + \frac{1}{\gamma} \left ({\sbm^{k}}\right )^\Tsf \left (\zbm^{k-1} - \zbm^k\right ) - \frac{1}{2\gamma}\left\|\zbm^k - \zbm^{k-1}\right\|^2_2, 
\end{align}
where we used the Lagrange multiplier update rule in the second equality. By plugging~\eqref{Eq:RegInequality} and~\eqref{Eq:RegInequality2} into~\eqref{Eq:RegInequality3} and rearranging the terms, we obtain
\begin{align}
\label{Eq:RegInequality4}
   \nonumber \phi\left (\xbm^k , \zbm^k , \sbm^{k-1}\right )& - \phi\left (\xbm^k , \zbm^{k-1} , \sbm^{k-1}\right )\leq  \frac{1}{\gamma} \left (\zbm^k- \zbmbar^k\right )^\Tsf \left (\zbmbar^k - \zbm^{k-1}\right )  + \frac{L}{2}\left\|\zbmbar^k - \zbm^{k-1}\right\|_2^2 \\
   \nonumber &\quad - \frac{1}{2\gamma}\left\|\zbm^k - \zbm^{k-1}\right\|^2_2+ \left (\frac{1}{\gamma} + \frac{L}{2}\right) \left\|\zbm^k - \zbmbar^k\right\|_2^2\\ 
   \nonumber & =  \frac{1}{\gamma} \left (\zbm^k- \zbmbar^k\right )^\Tsf \left (\zbmbar^k - \zbm^{k} + \zbm^k - \zbm^{k-1}\right )  +  \frac{L}{2}\left\|\zbmbar^k - \zbm^k + \zbm^k - \zbm^{k-1}\right\|_2^2 \\
    &\quad - \frac{1}{2\gamma}\left\|\zbm^k - \zbm^{k-1}\right\|^2_2+ \left(\frac{1}{\gamma} + \frac{L}{2}\right) \left\|\zbm^k - \zbmbar^k\right\|_2^2.
\end{align}
By using $\|\abm+\bbm\|^2 \leq 2\|\abm\|^2+ 2\|\bbm\|^2$, we can write
\begin{align}
\label{Eq:RegInequality5}
   \nonumber &\phi\left (\xbm^k , \zbm^k , \sbm^{k-1}\right ) - \phi\left (\xbm^k , \zbm^{k-1} , \sbm^{k-1}\right ) \leq  \frac{1}{\gamma} \left (\zbm^k- \zbmbar^k\right )^\Tsf \left (\zbmbar^k - \zbm^{k}\right ) +  \frac{1}{\gamma} \left (\zbm^k- \zbmbar^k\right )^\Tsf \left (\zbm^k - \zbm^{k-1}\right ) \\
  \nonumber &\quad\quad\quad~+ L \left\|\zbmbar^k - \zbm^k\right \|_2^2 + L \left\|\zbm^k - \zbm^{k-1}\right\|_2^2- \frac{1}{2\gamma}\left\|\zbm^k - \zbm^{k-1}\right\|^2_2+ \left(\frac{1}{\gamma} + \frac{L}{2}\right)\left\|\zbm^k - \zbmbar^k\right\|_2^2 \\
   \nonumber &\quad\quad\quad \leq   - \frac{1}{\gamma}\left\|\zbm^k- \zbmbar^k\right\|_2^2 + \frac{1}{\gamma} \left (\zbm^k- \zbmbar^k\right )^\Tsf \left (\zbm^k - \zbm^{k-1}\right ) - \left (\frac{1-2\gamma L}{2\gamma}\right ) \left\|\zbm^k - \zbm^{k-1}\right\|_2^2 \\
   \nonumber & \quad\quad\quad~ + \frac{1}{2\gamma}\left\|\zbm^k- \zbmbar^k\right\|_2^2 +\frac{11}{8\gamma} \left\|\zbm^k - \zbmbar^k\right\|_2^2\\
   &\quad\quad\quad= \frac{1}{\gamma} \left (\zbm^k- \zbmbar^k\right )^\Tsf \left (\zbm^k - \zbm^{k-1}\right ) - \left (\frac{1-2\gamma L}{2\gamma}\right ) \left\|\zbm^k - \zbm^{k-1}\right\|_2^2+ \frac{3}{8\gamma} \left \|\zbm^k - \zbmbar^k\right\|_2^2, 
\end{align}
where we used the fact that  $ 0< \gamma \leq 1/ (4L) $ in the second inequality.
From Assumption~\ref{As:BoundIters} and triangle inequality, we can write 
    \begin{equation}\label{Eq:2Rbound}
        \left\|\zbm^k - \zbm^{k-1}\right\|_2\leq \left\|\zbm^k - \zbm^\ast\right\|_2+\left\|\zbm^{k-1} - \zbm^\ast \right\|_2 \leq 2R,  
    \end{equation}
    where $\zbm^*$ is the stationary point of the augmented Lagrangian. 
Now by using this equation,  Assumption~\ref{As:BoundIters}, and the bound on denoisers distance in Assumption~\ref{As:InexactDistance}, we obtain
\begin{align}
\label{Eq:LagDecInequality2}
   \nonumber \phi\left (\xbm^k , \zbm^k , \sbm^{k-1}\right ) &\leq  \phi\left (\xbm^k , \zbm^{k-1} , \sbm^{k-1}\right )  +\frac{1}{\gamma} \left (\zbm^k- \zbmbar^k\right )^\Tsf \left (\zbm^k - \zbm^{k-1}\right ) \\
   \nonumber &\quad\quad - \left (\frac{1-2\gamma L}{2\gamma}\right ) \left\|\zbm^k - \zbm^{k-1}\right\|_2^2+ \frac{3}{8\gamma} \left\|\zbm^k - \zbmbar^k\right\|_2^2 \\
   \nonumber & \leq \phi\left (\xbm^k , \zbm^{k-1} , \sbm^{k-1}\right )  - \left (\frac{1-2\gamma L}{2\gamma}\right ) \left\|\zbm^k - \zbm^{k-1}\right\|_2^2 \\
   \nonumber & \quad \quad +  \frac{1}{\gamma}\left \|\zbm^k- \zbmbar^k\right\|_2 \left\|\zbm^k - \zbm^{k-1}\right\|_2 + \frac{3}{8\gamma} \left\|\zbm^k - \zbmbar^k\right\|_2^2\\
    & \leq \phi\left (\xbm^k , \zbm^{k-1} , \sbm^{k-1}\right )  - \left (\frac{1-2\gamma L}{2\gamma}\right )\left\|\zbm^k - \zbm^{k-1}\right\|_2^2 + \frac{3}{8\gamma} \delta_k^2+  \frac{2R}{\gamma} \delta_k.
\end{align}
 Note that from $\xbm^k = \prox_{\gamma g} (\zbm^{k-1} - \sbm^{k-1})$, we have 
 \begin{align*}
    \frac{1}{2\gamma}\left\|\xbm^{k} -\zbm^{k-1} + \sbm^{k-1}\right\|_2^2 + g\left (\xbm^k \right ) &=  \min_{\xbm \in \R^n} \{\frac{1}{2\gamma}\left\|\xbm -\zbm^{k-1} + \sbm^{k-1}\right\|_2^2 + g\left (\xbm\right )\}\\
    & \leq \frac{1}{2\gamma}\left\|\xbm^{k-1} -\zbm^{k-1} + \sbm^{k-1}\right\|_2^2 + g\left (\xbm^{k-1} \right ), 
 \end{align*}
which implies that 
\begin{equation}\label{Eq:LagDecInequality3}
    \phi\left (\xbm^k, \zbm^{k-1}, \sbm^{k-1}\right ) \leq \phi\left (\xbm^{k-1}, \zbm^{k-1}, \sbm^{k-1}\right ).
\end{equation}

\medskip\noindent
By combining equations~\eqref{Eq:LagDecInequality1},~\eqref{Eq:LagDecInequality2} and~\eqref{Eq:LagDecInequality3}, we obtain
    \begin{align*}
    \phi\left (\xbm^k , \zbm^k , \sbm^k\right ) \leq \phi\left (\xbm^{k-1}, \zbm^{k-1} , \sbm^{k-1}\right ) 
    - \left (\frac{1- 2\gamma L- 2\gamma^2 L^2}{2\gamma} \right ) \left\|\zbm^k -\zbm^{k-1} \right\|_2^2 + \frac{3}{8\gamma} \delta_k^2 + \frac{2R}{\gamma} \delta_k.
\end{align*}
\end{proof}

\begin{lemma}\label{Lem:BoundBelowInExact}
Assume that Assumptions~\ref{As:NonDegenerate}-\ref{As:BoundIters} hold and let the sequence $\{\xbm^k, \zbm^k , \sbm^k\}$ be generated via PnP-ADMM with \textbf{mismatched} MMSE denoiser using penalty parameter $0 < \gamma \leq 1/(4L)$. Then, the augment Lagrangian $\phi$ defined in~\eqref{Eq:AugLagrangian} is bounded from below
\begin{equation*}
    \inf_{\scriptscriptstyle{k\geq 1}}\phi\left (\xbm^k, \zbm^k, \sbm^k\right ) > -\infty. 
\end{equation*}
\end{lemma}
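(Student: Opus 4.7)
The plan is to show that every term in $\phi(\xbm^k, \zbm^k, \sbm^k)$ is uniformly controlled in $k$, so that a finite infimum follows immediately. First I would use the optimality of the mismatched MMSE denoiser, as in the proof of Lemma~\ref{Lem:InExactLagDecreasing}, to write $\sbm^k = \gamma \nabla \hine(\zbm^k)$. Denoting $M \defn \|\nabla \hine(\zbm^\ast)\|_2 < \infty$, the $L$-Lipschitz continuity of $\nabla \hine$ over $\Im(\Dsf_\sigma)$ from Assumption~\ref{As:LipschitzPrior} together with the bound $\|\zbm^k - \zbm^\ast\|_2 \leq R$ from Assumption~\ref{As:BoundIters} yields
\begin{equation*}
\|\sbm^k\|_2 \leq \gamma L \|\zbm^k - \zbm^\ast\|_2 + \gamma \|\nabla \hine(\zbm^\ast)\|_2 \leq \gamma(LR + M).
\end{equation*}

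Next, from the Lagrange multiplier update $\sbm^k = \sbm^{k-1} + \xbm^k - \zbm^k$ combined with~\eqref{Eq:boundonSinexact} and~\eqref{Eq:2Rbound} already established in the proof of Lemma~\ref{Lem:InExactLagDecreasing}, I would bound
\begin{equation*}
\|\xbm^k - \zbm^k\|_2 = \|\sbm^k - \sbm^{k-1}\|_2 \leq \gamma L \|\zbm^k - \zbm^{k-1}\|_2 \leq 2\gamma L R.
\end{equation*}
Applying Cauchy-Schwarz to the inner product term in~\eqref{Eq:AugLagrangian}, dropping the nonnegative quadratic penalty, and invoking Assumption~\ref{As:BoundedFromBelow} to lower bound $g$ and $h$ by constants $g^\ast, h^\ast > -\infty$ then gives
\begin{equation*}
\phi(\xbm^k, \zbm^k, \sbm^k) \geq g^\ast + h^\ast - \frac{1}{\gamma}\|\sbm^k\|_2\,\|\xbm^k - \zbm^k\|_2 \geq g^\ast + h^\ast - 2LR(LR + M),
\end{equation*}
which is a constant independent of $k$, so the infimum is finite.

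The only delicate point is justifying that $M = \|\nabla \hine(\zbm^\ast)\|_2$ is finite, which reduces to showing $\zbm^\ast \in \Im(\Dhat_\sigma)$. This follows from Assumption~\ref{As:LipschitzPrior} (the two denoisers share a range) together with the stationarity condition $\nabla h(\zbm^\ast) = \sbm^\ast/\gamma$, which itself forces $\zbm^\ast \in \Im(\Dsf_\sigma)$. Once this anchor point is in place, the remainder is a routine application of Lipschitz continuity and the boundedness assumptions; notably, Assumption~\ref{As:InexactDistance} plays no role here, so the mismatch error $\delta_k$ does not enter the lower bound.
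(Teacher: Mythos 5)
Your proof is correct, but it takes a genuinely different route from the paper's. The paper works through the \emph{target} regularizer $h$: it writes $\tfrac{1}{\gamma}\sbm^k = \nabla h(\zbmbar^k) + \tfrac{1}{\gamma}(\zbmbar^k - \zbm^k)$ via the auxiliary iterate $\zbmbar^k = \Dsf_\sigma(\xbm^k+\sbm^{k-1})$, uses the descent-type inequality to absorb the cross and quadratic terms of $\phi$ into $h(\xbm^k)$, and arrives at $\phi(\xbm^k,\zbm^k,\sbm^k) > g(\xbm^k) + h(\xbm^k) - 2(1+\gamma L)RL\,\delta_k$ — a bound that explicitly invokes Assumption~\ref{As:InexactDistance} and carries the mismatch level $\delta_k$. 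You instead work through the \emph{mismatched} regularizer $\hine$, for which $\sbm^k = \gamma\nabla\hine(\zbm^k)$ holds exactly with no auxiliary iterate, and control $\|\sbm^k\|_2$ by anchoring $\nabla\hine$ at the stationary point $\zbm^\ast$ via Assumption~\ref{As:BoundIters} and Lipschitz continuity. This buys three things: (i) Assumption~\ref{As:InexactDistance} is not needed, as you correctly observe; (ii) your lower bound is a single constant manifestly uniform in $k$, whereas the paper's is uniform only under the implicit premise that $\sup_k\delta_k<\infty$; (iii) you never evaluate $h$ at $\xbm^k$, sidestepping the question of whether $\xbm^k = \prox_{\gamma g}(\cdot)$ lies in $\Im(\Dsf_\sigma)$, outside of which $h=+\infty$ by~\eqref{Eq:ExpReg}. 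The price is the extra constant $M = \|\nabla\hine(\zbm^\ast)\|_2$, whose finiteness you justify adequately ($\zbm^\ast\in\Im(\Dsf_\sigma)=\Im(\Dhat_\sigma)$ since a stationary point of $\phi$ must have $h(\zbm^\ast)<\infty$). One trivial slip: $\tfrac{1}{\gamma}\|\sbm^k\|_2\|\xbm^k-\zbm^k\|_2 \leq \tfrac{1}{\gamma}\cdot\gamma(LR+M)\cdot 2\gamma LR = 2\gamma LR(LR+M)$, so your final constant is missing a factor of $\gamma$; this does not affect the conclusion.
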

\begin{proof}
From the smoothness of  $h$ (regularizer associated with the target denoiser $\Dsf_\sigma$) for any $\zbmbar^k \in \Im(\Dsf_\sigma)$, the optimality condition for the target MMSE denoiser, and the Lagrange multiplier update rule in the form of $\sbm^k = \sbm^{k-1} +\xbm^k - \zbm^k$, we have
    \begin{equation}
    \label{Eq:OptimProx}
    \nabla h\left (\zbmbar^k\right ) = \frac{1}{\gamma} \left (\sbm^{k-1} + \xbm^k - \zbmbar^k \right )= \frac{1}{\gamma}\sbm^{k} + \frac{1}{\gamma} \left (\zbm^k - \zbmbar^k\right ). 
    \end{equation}
    By using  the Lipschitz continuity of $\nabla h$ in Assumption~\ref{As:LipschitzPrior} and the fact that $\gamma\leq 1/(4L) < 1/L$, we have
    \begin{align*}
         h\left (\xbm^k\right ) &\leq h\left (\zbm^k\right ) + \nabla h\left (\zbm^k\right )^\Tsf \left (\xbm^k - \zbm^k\right ) + \frac{L}{2}\left \|\xbm^k - \zbm^k\right\|_2^2 \\
        &< h\left (\zbm^k\right ) + \nabla h\left (\zbm^k\right )^\Tsf \left (\xbm^k - \zbm^k\right ) + \frac{1}{2\gamma}\left\|\xbm^k - \zbm^k\right\|_2^2.
    \end{align*}
    By using this inequality and equation~\eqref{Eq:OptimProx}, we can write
       \begin{align}\label{Eq:boundlagnew}
        \nonumber\phi\left (\xbm^k, \zbm^k, \sbm^k\right ) &= g\left (\xbm^k\right ) + h\left (\zbm^k\right ) + \frac{1}{\gamma}\left (\sbm^k\right )^\Tsf \left (\xbm^k - \zbm^k\right ) + \frac{1}{2\gamma} \left\|\xbm^k - \zbm^k\right\|_2^2\\
       \nonumber & = g\left (\xbm^k\right ) + h\left (\zbm^k\right ) + \nabla h\left (\zbmbar^k\right )^\Tsf  \left (\xbm^k - \zbm^k\right ) + \frac{1}{\gamma}\left (\zbmbar^k - \zbm^k\right )^\Tsf \left (\xbm^k - \zbm^k\right ) + \frac{1}{2\gamma} \left\|\xbm^k - \zbm^k \right\|_2^2\\
         \nonumber& =  g\left (\xbm^k\right ) +h\left (\zbm^k\right ) + \nabla h\left (\zbm^k\right )^\Tsf \left (\xbm^k - \zbm^k\right ) + \frac{1}{2\gamma}\left\|\xbm^k - \zbm^k\right\|_2^2 \\
       \nonumber &\quad + \left (\nabla h\left (\zbmbar^k\right ) - \nabla h\left (\zbm^k\right )\right )^\Tsf  \left (\xbm^k - \zbm^k\right )  + \frac{1}{\gamma}\left (\zbmbar^k - \zbm^k\right )^\Tsf \left (\xbm^k - \zbm^k\right )\\
     & > g\left (\xbm^k\right ) + h\left (\xbm^k\right ) -\left\|\nabla h\left (\zbmbar^k\right ) - \nabla h\left (\zbm^k\right )\right\|_2\left\|\xbm^k - \zbm^k\right\|_2 -\frac{1}{\gamma}\left\|\zbmbar^k - \zbm^k\right\|_2 \left\|\xbm^k - \zbm^k\right\|_2, 
    \end{align}
    where we added and subtracted the term $\nabla h(\zbm^k)^\Tsf  (\xbm^k - \zbm^k)$ in the third line and used Cauchy-Schwarz inequality in the last line. 

\medskip\noindent
    Now from the Lagrange multiplier update rule $\sbm^k = \sbm^{k-1} +\xbm ^k -\zbm^k$, equations~\eqref{Eq:boundonSinexact} and \eqref{Eq:2Rbound}, we obtain 
    \begin{equation}\label{Eq:BoundonDualVar}
        \left\|\xbm^k -\zbm^k \right\|_2 =\left\|\sbm^k - \sbm^{k-1}\right\|_2 \leq \gamma L \left\|\zbm^k - \zbm^{k-1}\right\|_2 \leq 2\gamma L R.
    \end{equation}
    By using the bound on the distance of target and mismatched denoisers in~Assumption~\ref{As:InexactDistance}, Lipschitz continuity of $\nabla h$ in Assumption~\ref{As:LipschitzPrior}, equations~\eqref{Eq:boundlagnew} and \eqref{Eq:BoundonDualVar}, we get
    \begin{equation*}
                \phi\left (\xbm^k, \zbm^k, \sbm^k\right )  >g\left (\xbm^k\right ) + h\left (\xbm^k\right ) - 2\left (1+\gamma L\right ) R L \delta_k.
    \end{equation*}
    From the fact that both functions $g$ and $h$ are bounded from below in Assumption \ref{As:BoundedFromBelow} and the fact that $\gamma$, $\delta_k$, $R$, and $L$ are finite constants, we conclude that the augmented Lagrangian is bounded from below. This is equivalent to the  existence of $\phi^\ast = \phi(\xbm^\ast, \zbm^\ast, \sbm^\ast) > -\infty$ such that we have almost surely $\phi^\ast \leq \phi(\xbm^k, \zbm^k, \sbm^k)$, for all $k \geq 1$.
\end{proof}


\section{Proof of Theorem 2}\label{Sup:Sec:theorem2}
\begin{suptheorem}
    Run PnP-ADMM with the MMSE denoiser for $t \geq 1$ iterations under Assumptions~\ref{As:NonDegenerate}-\ref{As:LipschitzPrior} with the penalty parameter  $0<\gamma\leq 1/(4L)$. Then, we have 
    \begin{equation*}
       \min_{1\leq k\leq t} \left\|\nabla f\left (\xbm^k\right )\right\|_2^2 \leq  \frac{1}{t} \sum_{k=1}^t\left\|\nabla f\left (\xbm^k\right )\right\|^2_2 
    \leq \frac{C}{t} \left (\phi\left (\xbm^0, \zbm^0, \sbm^0\right )-\phi^\ast\right ) , 
    \end{equation*}
where $C>0$ is an iteration independent constant.
\end{suptheorem}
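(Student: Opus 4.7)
The plan is to obtain Theorem~\ref{Thm:ExactPnPADMM} as the specialization of Theorem~\ref{Thm:MainThm} to the case where the mismatched denoiser coincides with the target denoiser. Formally, setting $\Dhat_\sigma = \Dsf_\sigma$ forces $\zbm^k = \zbmbar^k$ at every iteration, so Assumption~\ref{As:InexactDistance} holds with $\delta_k = 0$ for all $k \geq 1$, and Assumption~\ref{As:BoundIters} becomes unnecessary because the constant $R$ only entered the analysis through the terms multiplying $\delta_k$. This is exactly the reason Theorem~\ref{Thm:ExactPnPADMM} drops Assumptions~\ref{As:InexactDistance}--\ref{As:BoundIters} and removes the error term $\varepsilonbar_t$.

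Concretely, I would first state and prove an exact analog of Lemma~\ref{Lem:InExactLagDecreasing}: with the target denoiser, the optimality condition for the MMSE denoiser gives $\nabla h(\zbm^k) = \sbm^k / \gamma$ directly (no mismatch correction), and the identity $\|\sbm^k - \sbm^{k-1}\|_2 \leq \gamma L \|\zbm^k - \zbm^{k-1}\|_2$ follows as before. Then the chain of inequalities~\eqref{Eq:LagDecInequality1}--\eqref{Eq:LagDecInequality3} simplifies because the term $\zbm^k - \zbmbar^k$ vanishes, yielding
\[
\phi(\xbm^k,\zbm^k,\sbm^k) \leq \phi(\xbm^{k-1},\zbm^{k-1},\sbm^{k-1}) - \Bigl(\frac{1-2\gamma L - 2\gamma^2 L^2}{2\gamma}\Bigr)\|\zbm^k - \zbm^{k-1}\|_2^2.
\]
Under $\gamma \leq 1/(4L)$, the coefficient in front of $\|\zbm^k - \zbm^{k-1}\|_2^2$ is strictly positive, so $\phi$ decreases monotonically. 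Next, I would prove the analog of Lemma~\ref{Lem:BoundBelowInExact}; with $\delta_k = 0$ the last line of~\eqref{Eq:boundlagnew} becomes $\phi(\xbm^k,\zbm^k,\sbm^k) \geq g(\xbm^k) + h(\xbm^k) \geq f^\ast$, so $\phi$ is bounded below by $\phi^\ast$ directly from Assumption~\ref{As:BoundedFromBelow}.

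Having established monotone decrease and a lower bound, I would telescope the descent inequality across $k = 1,\ldots,t$ to obtain
\[
\sum_{k=1}^t \|\zbm^k - \zbm^{k-1}\|_2^2 \leq \frac{2\gamma}{1-2\gamma L - 2\gamma^2 L^2}\bigl(\phi(\xbm^0,\zbm^0,\sbm^0) - \phi^\ast\bigr).
\]
Then I would reproduce the gradient bound from the proof of Theorem~\ref{Thm:MainThm}, but in the $\delta_k = 0$ regime: combining the optimality conditions for $\prox_{\gamma g}$ and for $\Dsf_\sigma$ gives $\|\nabla f(\xbm^k)\|_2 \leq (1/\gamma + \gamma L^2)\|\zbm^k - \zbm^{k-1}\|_2$, with no additive $\delta_k$ term. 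Squaring, averaging over $t$ iterations, and substituting the telescoped bound produces
\[
\min_{1 \leq k \leq t}\|\nabla f(\xbm^k)\|_2^2 \leq \frac{1}{t}\sum_{k=1}^t \|\nabla f(\xbm^k)\|_2^2 \leq \frac{C}{t}\bigl(\phi(\xbm^0,\zbm^0,\sbm^0) - \phi^\ast\bigr),
\]
with $C \defn 4(1+\gamma^2 L^2)^2 / \bigl(\gamma(1-2\gamma L - 2\gamma^2 L^2)\bigr)$ independent of $t$.

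There is no genuine obstacle here beyond bookkeeping: every estimate used in the proof of Theorem~\ref{Thm:MainThm} has a clean $\delta_k \to 0$ limit, and the key structural facts (Lagrangian descent, lower-boundedness, and the gradient-to-iterate-gap inequality) persist verbatim. The only point requiring mild care is verifying that Assumptions~\ref{As:InexactDistance}--\ref{As:BoundIters} are indeed not invoked in the descent and lower-bound steps once $\delta_k = 0$; inspection of~\eqref{Eq:LagDecInequality2} and~\eqref{Eq:boundlagnew} confirms that the bound $R$ appears only multiplied by $\delta_k$, so it drops out and Assumption~\ref{As:BoundIters} can be dispensed with. For this reason I would present the argument as a streamlined re-run of the Theorem~\ref{Thm:MainThm} proof with the mismatch terms set to zero, rather than as an independent derivation.
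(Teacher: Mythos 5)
Your proposal is correct and follows essentially the same route as the paper: the paper proves Theorem 2 independently via its own descent lemma (Lemma~\ref{lem:LagDecExact}) and lower-bound lemma (Lemma~\ref{Lem:BoundBelowExact}), but these are exactly the $\delta_k=0$ specializations you describe, and Assumptions~\ref{As:InexactDistance}--\ref{As:BoundIters} indeed drop out for the reason you give. The only difference is cosmetic: by bounding $h(\zbm^k)-h(\zbm^{k-1})$ directly rather than routing through $\zbmbar^k$ and the $(a+b)^2\leq 2a^2+2b^2$ splitting, the paper obtains the slightly sharper descent coefficient $(1-\gamma L-2\gamma^2L^2)/(2\gamma)$ in place of your $(1-2\gamma L-2\gamma^2L^2)/(2\gamma)$, but both are positive for $\gamma\leq 1/(4L)$ and yield the same conclusion.
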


\begin{proof}
    Note that for PnP-ADMM with the MMSE denoiser, Lemma \ref{lem:LagDecExact} states
    \begin{equation}
       \left  \|\zbm^k -\zbm^{k-1} \right\|_2^2  \leq \frac{2\gamma}{1- \gamma L - 2 \gamma^2 L^2}\left ( \phi\left (\xbm^{k-1}, \zbm^{k-1} , \sbm^{k-1}\right ) - \phi\left (\xbm^k , \zbm^k , \sbm^k\right )\right ).
    \end{equation}
   By averaging over $t \geq 1$ iterations and using the fact that the augmented Lagrangian is bounded from below in Lemma~\ref{Lem:BoundBelowExact}, we obtain
    \begin{align}\label{Eq:ConvIterStoExact}
\nonumber\frac{1}{t} \sum_{k=1}^t\left\|\zbm^k - \zbm^{k-1}\right\|\  &\leq \frac{B}{t}\left ( \phi\left (\xbm^0, \zbm^0 , \sbm^0\right ) - \phi\left (\xbm^t , \zbm^t , \sbm^t\right )\right ) \\
&\leq \frac{B}{t}\left ( \phi\left (\xbm^0, \zbm^0 , \sbm^0\right ) - \phi^* \right ), 
    \end{align}
where $B \defn 2\gamma/(1 - \gamma L - 2 \gamma^2 L^2)$. From the optimality conditions for the MMSE denoiser $\Dsf_\sigma$ and $\xbm^k = \prox_{\gamma g}(\zbm^{k-1} - \sbm^{k-1})$, we have
\begin{equation}\label{Eq:optimalitCon}
  \nabla g\left (\xbm^k \right ) +  \frac{1}{\gamma}\left ( \xbm^k + \sbm^{k-1} - \zbm^{k-1} \right ) = \bm{0} \quad \text{and}\quad \nabla h\left (\zbm^k \right ) + \frac{1}{\gamma}\left (\zbm^{k} - \xbm^k - \sbm^{k-1}\right ) =\bm{0}.
\end{equation}
By using the $L$-Lipschitz continuity of $\nabla h$ and the Lagrange multiplier update rule in the form of $\sbm^ k = \sbm^{k-1} + \zbm^k - \xbm^k$, we can write 
\begin{align*}\label{Eq:boundonXZ}
  \left \|\nabla h\left (\xbm^k\right ) - \nabla h\left (\zbm^k\right )\right\|_2  &\leq L \left\| \xbm^k -  \zbm^k\right\|_2 
   = L \left\|\sbm^k -\sbm^{k-1}\right\|_2\\
   \nonumber & = \gamma L\left \|\nabla h\left (\zbm^k\right ) - \nabla h\left (\zbm^{k-1}\right ) \right\|_2 \\
    & \leq \gamma L^2 \left\|\zbm^k - \zbm^{k-1}\right\|_2.
\end{align*}
By using this equation and equation~\eqref{Eq:optimalitCon}, we have for the objective function in~\eqref{Eq:OptimInverseProb} 
\begin{align}
   \nonumber \left\|\nabla f\left (\xbm^k\right )\right\|_2 
   \nonumber&= \left\|\nabla g\left (\xbm^k\right ) + \nabla h\left (\xbm^k\right )\right\|_2 \\
   \nonumber&= \left \|\nabla g\left (\xbm^k\right )  +  \frac{1}{\gamma}\left ( \xbm^k + \sbm^{k-1} - \zbm^{k-1} \right ) + \nabla h\left (\xbm^k\right ) + \frac{1}{\gamma} \left (\zbm^{k-1}- \sbm^{k-1} -\xbm^k \right) \right\|_2 \\
   \nonumber& = \|\nabla h\left (\zbm^k\right )  + \frac{1}{\gamma} \left (\zbm^{k} - \xbm^k - \sbm^{k-1}\right  )  + \nabla h\left (\xbm^k\right )- \nabla h\left (\zbm^k\right ) + \frac{1}{\gamma}  \left (\zbm^{k-1}- \zbm^k \right )\|_2\\
   \nonumber&\leq  \left\| \nabla h\left (\xbm^k\right )- \nabla h\left (\zbm^k\right )\right \|_2 + \frac{1}{\gamma} \left\|\zbm^k- \zbm^{k-1}\right\|_2\\
   \nonumber & \leq \left ( \frac{1}{\gamma} + \gamma L^2\right ) \left\|\zbm^k- \zbm^{k-1}\right\|_2
\end{align}
where we used triangle inequality in the first inequality.
By squaring both sides, averaging over $t \geq 1$ iterations, and usi equation~\eqref{Eq:ConvIterStoExact}, we get the desired result
\begin{equation*}
      \min_{1\leq k\leq t} \left\|\nabla f\left (\xbm^k\right ) \right\|_2^2 \leq  \frac{1}{t} \sum_{k=1}^t\left\|\nabla f\left (\xbm^k\right )\right\|^2_2  \leq \frac{C}{t}\left ( \phi\left (\xbm^0, \zbm^0 , \sbm^0\right ) - \phi^*\right)
\end{equation*}
where $C \defn B(1+\gamma^2 L^2)/\gamma^2$.
\end{proof}

\section{Useful results for Theorem 2}\label{Sup:Sec:lemmas2}
\begin{lemma}\label{lem:LagDecExact}
    Assume that Assumptions~\ref{As:NonDegenerate}-\ref{As:LipschitzPrior} hold and let the sequence $\{\xbm^k , \zbm^k , \sbm^k\}$ be generated via iterations of PnP-ADMM with the MMSE denoiser using the penalty parameter $ 0<\gamma<1/(4L)$.
Then for the augmented Lagrangian defined in~\ref{Eq:AugLagrangian}, we have that
\begin{equation*}
    \label{Eq:LgDecreasing}
 \phi\left (\xbm^k , \zbm^k , \sbm^k\right ) \leq \phi\left (\xbm^{k-1}, \zbm^{k-1} , \sbm^{k-1}\right ) -  \left (\frac{1- \gamma L - 2 \gamma^2 L^2}{2\gamma}\right ) \left\|\zbm^k -\zbm^{k-1} \right\|_2^2.
\end{equation*} 
\end{lemma}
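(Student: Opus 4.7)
The plan is to mirror the structure of Lemma~\ref{Lem:InExactLagDecreasing}, but specialized to the case where the exact target MMSE denoiser is used. In that regime the auxiliary variable $\zbmbar^k$ from the mismatched analysis coincides with $\zbm^k$, so the error-controlling constants $\delta_k$ drop out and we can extract the cleaner descent constant $(1-\gamma L - 2\gamma^2 L^2)/(2\gamma)$. Concretely, I will telescope the change of $\phi$ over one full PnP-ADMM cycle by splitting it according to the three block updates:
\begin{equation*}
\phi(\xbm^k,\zbm^k,\sbm^k) - \phi(\xbm^{k-1},\zbm^{k-1},\sbm^{k-1})
= \underbrace{[\phi(\xbm^k,\zbm^{k-1},\sbm^{k-1}) - \phi(\xbm^{k-1},\zbm^{k-1},\sbm^{k-1})]}_{\xbm\text{-update}}
+ \underbrace{[\phi(\xbm^k,\zbm^k,\sbm^{k-1}) - \phi(\xbm^k,\zbm^{k-1},\sbm^{k-1})]}_{\zbm\text{-update}}
+ \underbrace{[\phi(\xbm^k,\zbm^k,\sbm^k) - \phi(\xbm^k,\zbm^k,\sbm^{k-1})]}_{\sbm\text{-update}},
\end{equation*}
and bound each bracket separately.

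The first bracket is non-positive: since $\xbm^k = \prox_{\gamma g}(\zbm^{k-1}-\sbm^{k-1})$ is the minimizer of $\xbm \mapsto g(\xbm) + \tfrac{1}{2\gamma}\|\xbm-\zbm^{k-1}+\sbm^{k-1}\|_2^2$, plugging in $\xbm^{k-1}$ as a competitor gives $\phi(\xbm^k,\zbm^{k-1},\sbm^{k-1}) \leq \phi(\xbm^{k-1},\zbm^{k-1},\sbm^{k-1})$, exactly as in equation~\eqref{Eq:LagDecInequality3}. The third bracket is handled by the MMSE-optimality identity $\nabla h(\zbm^k) = \sbm^k/\gamma$ (from Assumption~\ref{As:LipschitzPrior} and the characterization~\eqref{Eq:ExpReg}) combined with the $L$-Lipschitzness of $\nabla h$, which yields $\|\sbm^k - \sbm^{k-1}\|_2 \leq \gamma L\|\zbm^k - \zbm^{k-1}\|_2$; applying the Lagrange-multiplier update rule as in~\eqref{Eq:LagDecInequality1} then gives
\begin{equation*}
\phi(\xbm^k,\zbm^k,\sbm^k) - \phi(\xbm^k,\zbm^k,\sbm^{k-1}) = \tfrac{1}{\gamma}\|\sbm^k-\sbm^{k-1}\|_2^2 \leq \gamma L^2\|\zbm^k-\zbm^{k-1}\|_2^2.
\end{equation*}

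The main obstacle is the middle bracket, the $\zbm$-update, which is where the descent actually comes from. I will handle it by the same two-step argument used in Lemma~\ref{Lem:InExactLagDecreasing}, simplified by $\zbmbar^k = \zbm^k$. The descent lemma for $h$ at $\zbm^{k-1}$ (via Assumption~\ref{As:LipschitzPrior}) plus the optimality identity~\eqref{Eq:LipGrad2} gives
\begin{equation*}
h(\zbm^k) - h(\zbm^{k-1}) \leq \tfrac{1}{\gamma}(\sbm^k)^\Tsf(\zbm^k-\zbm^{k-1}) + \tfrac{L}{2}\|\zbm^k-\zbm^{k-1}\|_2^2,
\end{equation*}
and expanding the quadratic penalty in $\phi$ and substituting the multiplier update yields a cancellation of the inner-product terms, leaving
\begin{equation*}
\phi(\xbm^k,\zbm^k,\sbm^{k-1}) - \phi(\xbm^k,\zbm^{k-1},\sbm^{k-1}) \leq -\,\tfrac{1-\gamma L}{2\gamma}\|\zbm^k-\zbm^{k-1}\|_2^2.
\end{equation*}

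Summing the three contributions and collecting the coefficients of $\|\zbm^k - \zbm^{k-1}\|_2^2$ gives the coefficient $\gamma L^2 - \tfrac{1-\gamma L}{2\gamma} = -\tfrac{1-\gamma L - 2\gamma^2 L^2}{2\gamma}$, which is exactly the constant in the statement. Note that the condition $\gamma \leq 1/(4L)$ ensures $1 - \gamma L - 2\gamma^2 L^2 > 0$, so the descent is strict. I expect the only subtle point will be the careful bookkeeping in the $\zbm$-update expansion, which is where signs and the Lagrange-multiplier substitution have to line up to cancel the $(1/\gamma)\sbm^k$ terms; the remaining pieces are direct specializations of the inexact proof.
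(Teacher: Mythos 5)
Your proposal is correct and follows essentially the same route as the paper: the three-bracket telescoping over the $\xbm$-, $\zbm$-, and $\sbm$-updates is exactly the combination of equations~\eqref{Eq:LagIneqExact3}, \eqref{Eq:LagIneqExact2}, and~\eqref{Eq:LagIneqExact1} in the paper's proof, with the same bounds $0$, $-\tfrac{1-\gamma L}{2\gamma}\|\zbm^k-\zbm^{k-1}\|_2^2$, and $\gamma L^2\|\zbm^k-\zbm^{k-1}\|_2^2$ for the respective brackets, and the same final collection of coefficients. The one bookkeeping step you flag as subtle (the cancellation of the $(1/\gamma)\sbm$ inner products in the $\zbm$-update) goes through exactly as you anticipate via the substitution $\sbm^k-\sbm^{k-1}=\xbm^k-\zbm^k$.
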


\begin{proof}
From the smoothness of  $h$  for any $\zbm^k \in \Im(\Dsf_\sigma)$, the optimality condition for the MMSE denoiser, and the Lagrange multiplier update rule in the form of $\sbm^k = \sbm^{k-1} +\xbm^k - \zbm^k$, we have
\begin{equation*}
    \nabla h\left (\zbm^k\right ) = \frac{1}{\gamma} \left (\xbm^k + \sbm^{k-1} - \zbm^k \right ) = \frac{1}{\gamma} \sbm^k .
\end{equation*}
From this equality and the definition of the augmented Lagrangian in~\eqref{Eq:AugLagrangian}, we have
\begin{align}\label{Eq:LagIneqExact1}
    \nonumber\phi\left (\xbm^k , \zbm^k , \sbm^k\right ) - \phi\left (\xbm^k , \zbm^k , \sbm^{k-1}\right ) 
    \nonumber&= \frac{1}{\gamma} \left (\sbm^k - \sbm^{k-1}\right )^\Tsf \left (\xbm^k - \zbm^k\right )\\
    & \nonumber = \frac{1}{\gamma} \left \|\sbm^k - \sbm^{k-1}\right\|_2^2 = \gamma\left\|\nabla h\left (\zbm^k \right ) - \nabla h\left (\zbm^{k-1}\right )\right\|^2_2\\
    & \leq \gamma L^2 \left\|\zbm^k - \zbm^{k-1}\right\|_2^2,  
\end{align}
where in the last line we used $L$-Lipschitz continuity of $\nabla h$ in Assumption~\ref{As:LipschitzPrior}. Additionally, we have
\begin{align*}
    h\left (\zbm^k\right ) - h\left (\zbm^{k-1}\right ) 
    \nonumber &\leq \nabla h\left (\zbm^k\right ) ^\Tsf \left (\zbm^k - \zbm^{k-1}\right ) + \frac{L}{2}\left\|\zbm^k - \zbm^{k-1}\right\|_2^2\\
     & = \frac{1}{\gamma} \left (\sbm^k\right )^\Tsf \left (\zbm^k - \zbm^{k-1}\right ) + \frac{L}{2}\left\|\zbm^k - \zbm^{k-1}\right\|_2^2.
\end{align*}
Now by using this equation and the definition of the augmented Lagrangian, we have 
\begin{align}\label{Eq:LagIneqExact2}
    \nonumber \phi\left (\xbm^k , \zbm^k , \sbm^{k-1}\right )  &- \phi\left (\xbm^k , \zbm^{k-1} , \sbm^{k-1}\right ) =  h\left (\zbm^k\right ) - h\left (\zbm^{k-1}\right )+ \frac{1}{\gamma} \left ({\sbm^{k-1}}\right )^\Tsf \left (\zbm^{k-1} - \zbm^k\right )\\
    \nonumber& \quad + \frac{1}{2\gamma}\left\|\xbm^k - \zbm^k\right \|_2^2 -\frac{1}{2\gamma}\left\|\xbm^k - \zbm^{k-1} \right\|_2^2 \\
    \nonumber &=  h\left (\zbm^k\right ) - h\left (\zbm^{k-1}\right ) + \frac{1}{\gamma} \left ({\sbm^{k-1}}\right )^\Tsf \left (\zbm^{k-1} - \zbm^k\right )\\
    \nonumber &\quad  + \frac{1}{2\gamma}\left (2\xbm^k - \zbm^k - \zbm^{k-1}\right )^\Tsf \left (\zbm^{k-1} - \zbm^k\right )\\
    \nonumber& =  h\left (\zbm^k\right ) - h\left (\zbm^{k-1}\right )+ \frac{1}{\gamma} \left ({\sbm^{k-1}}\right )^\Tsf \left (\zbm^{k-1} - \zbm^k\right ) \\
    \nonumber &\quad+ \frac{1}{\gamma}\left (\sbm^k - \sbm^{k-1}\right )^\Tsf \left (\zbm^{k-1} - \zbm^k\right ) - \frac{1}{2\gamma}\left\|\zbm^k - \zbm^{k-1}\right\|^2_2\\
    \nonumber& \leq \frac{1}{\gamma} \left (\sbm^k\right )^\Tsf \left (\zbm^k - \zbm^{k-1}\right ) + \frac{L}{2}\left\|\zbm^k - \zbm^{k-1}\right\|_2^2 + \frac{1}{\gamma} \left ({\sbm^{k-1}}\right )^\Tsf \left (\zbm^{k-1} - \zbm^k\right ) \\
    \nonumber &\quad \quad + \frac{1}{\gamma}\left (\sbm^k - \sbm^{k-1}\right )^\Tsf \left (\zbm^{k-1} - \zbm^k\right ) - \frac{1}{2\gamma}\left\|\zbm^k - \zbm^{k-1}\right\|^2_2\\
    & \leq  - \left (\frac{1-\gamma L}{2\gamma}\right )\left\|\zbm^k - \zbm^{k-1}\right\|^2_2. 
\end{align}
Note that from $\xbm^k = \prox_{\gamma g} (\zbm^{k-1} - \sbm^{k-1})$, we have 
 \begin{align*}
    \frac{1}{2\gamma}\left\|\xbm^{k} -\zbm^{k-1} + \sbm^{k-1}\right\|_2^2 + g\left (\xbm^k \right ) &=  \min_{\xbm \in \R^n} \{\frac{1}{2\gamma}\left\|\xbm -\zbm^{k-1} + \sbm^{k-1}\right\|_2^2 + g\left (\xbm\right )\}\\
    & \leq \frac{1}{2\gamma}\left\|\xbm^{k-1} -\zbm^{k-1} + \sbm^{k-1}\right\|_2^2 + g\left (\xbm^{k-1} \right ), 
 \end{align*}
which implies that 
\begin{equation}\label{Eq:LagIneqExact3}
    \phi\left (\xbm^k, \zbm^{k-1}, \sbm^{k-1}\right )\leq \phi\left (\xbm^{k-1}, \zbm^{k-1}, \sbm^{k-1}\right ).
\end{equation}
Now by combining the results from equations~\eqref{Eq:LagIneqExact1},~\eqref{Eq:LagIneqExact2} and~\eqref{Eq:LagIneqExact3}, we have 
\begin{equation*}
    \phi\left (\xbm^k , \zbm^k , \sbm^k\right ) \leq \phi\left (\xbm^{k-1}, \zbm^{k-1} , \sbm^{k-1}\right ) -  \left (\frac{1- \gamma L - 2 \gamma^2 L^2}{2\gamma}\right ) \left\|\zbm^k -\zbm^{k-1} \right\|_2^2.
\end{equation*}
\end{proof}

\begin{lemma}\label{Lem:BoundBelowExact}
Assume that Assumptions~\ref{As:NonDegenerate}-\ref{As:LipschitzPrior} hold and let the sequence $\{\xbm^k, \zbm^k , \sbm^k\}$ be generated via PnP-ADMM with the MMSE denoiser using the penalty parameter $ 0<\gamma<1/(4L)$. Then the augmented Lagrangian $\phi$ defined in~\eqref{Eq:AugLagrangian} is bounded from below
\begin{equation*}
    \inf_{\scriptscriptstyle{k\geq1}}\phi\left ( \xbm^k, \zbm^k, \sbm^k\right ) > -\infty. 
\end{equation*}
\end{lemma}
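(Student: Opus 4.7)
My plan is to mirror the strategy used in Lemma~\ref{Lem:BoundBelowInExact} but drop everything that comes from the denoiser discrepancy $\delta_k$, since the exact-denoiser case is strictly easier. The goal is to lower bound $\phi(\xbm^k,\zbm^k,\sbm^k)$ by $g(\xbm^k)+h(\xbm^k)$ and then invoke Assumption~\ref{As:BoundedFromBelow}.

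First, I would use the optimality condition for the target MMSE denoiser $\Dsf_\sigma$ together with the dual update $\sbm^k = \sbm^{k-1}+\xbm^k-\zbm^k$. Just as in equation~\eqref{Eq:OptimProx}, this identifies
\begin{equation*}
\nabla h(\zbm^k) \;=\; \frac{1}{\gamma}\bigl(\sbm^{k-1}+\xbm^k-\zbm^k\bigr) \;=\; \frac{1}{\gamma}\sbm^k ,
\end{equation*}
which is valid because $\zbm^k\in\Im(\Dsf_\sigma)$ and $h$ is smooth on its image by Assumption~\ref{As:LipschitzPrior}. Note that here there is no analogue of the $\zbmbar^k$ correction term that appeared in the mismatched proof, because the output of the denoiser already coincides with the exact MMSE output.

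Next I would apply the descent lemma associated with the $L$-Lipschitz continuity of $\nabla h$ on $\Im(\Dsf_\sigma)$, using the stepsize restriction $\gamma \leq 1/(4L) < 1/L$ to replace $L/2$ by $1/(2\gamma)$ strictly:
\begin{equation*}
 h(\xbm^k) \;\leq\; h(\zbm^k) + \nabla h(\zbm^k)^{\Tsf}(\xbm^k-\zbm^k) + \tfrac{L}{2}\|\xbm^k-\zbm^k\|_2^2 \;<\; h(\zbm^k) + \tfrac{1}{\gamma}(\sbm^k)^{\Tsf}(\xbm^k-\zbm^k) + \tfrac{1}{2\gamma}\|\xbm^k-\zbm^k\|_2^2 .
\end{equation*}
Plugging this into the definition of the augmented Lagrangian in~\eqref{Eq:AugLagrangian} cancels the inner product and quadratic penalty, leaving
\begin{equation*}
\phi(\xbm^k,\zbm^k,\sbm^k) \;>\; g(\xbm^k) + h(\xbm^k).
\end{equation*}
Assumption~\ref{As:BoundedFromBelow} then gives $g(\xbm^k)+h(\xbm^k) \geq f^\ast > -\infty$ uniformly in $k$, so $\inf_{k\geq 1}\phi(\xbm^k,\zbm^k,\sbm^k)>-\infty$, yielding the existence of $\phi^\ast$.

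There is no real obstacle here: compared to Lemma~\ref{Lem:BoundBelowInExact}, the two cross terms that required Assumption~\ref{As:BoundIters} and Assumption~\ref{As:InexactDistance} to control (namely $\|\nabla h(\zbmbar^k)-\nabla h(\zbm^k)\|_2\|\xbm^k-\zbm^k\|_2$ and $\tfrac{1}{\gamma}\|\zbmbar^k-\zbm^k\|_2\|\xbm^k-\zbm^k\|_2$) both vanish identically, since $\zbmbar^k=\zbm^k$ in the exact setting. The only thing I need to double check is that the strict inequality $\gamma L < 1$ (used to pass from $L/2$ to $1/(2\gamma)$) is consistent with the hypothesis $0<\gamma\leq 1/(4L)$, which it clearly is.
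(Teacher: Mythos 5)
Your proposal is correct and follows essentially the same route as the paper's proof of Lemma~\ref{Lem:BoundBelowExact}: the identity $\nabla h(\zbm^k)=\sbm^k/\gamma$ from the denoiser optimality condition and the dual update, the descent lemma with $L/2<1/(2\gamma)$, and the cancellation inside $\phi$ to obtain $\phi(\xbm^k,\zbm^k,\sbm^k)>g(\xbm^k)+h(\xbm^k)$, concluded via Assumption~\ref{As:BoundedFromBelow}. Your observation that the $\zbmbar^k$ cross terms from Lemma~\ref{Lem:BoundBelowInExact} vanish identically in the exact case is exactly the simplification the paper uses.
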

\begin{proof}
From the smoothness of  $h$  for any $\zbm^k \in \Im(\Dsf_\sigma)$, the optimality condition for the MMSE denoiser, and the Lagrange multiplier update rule in the form of $\sbm^k = \sbm^{k-1} +\xbm^k - \zbm^k$, we have
\begin{equation}\label{Eq:optimofHexact}
    \nabla h\left(\zbm^k\right ) = \frac{1}{\gamma} \left(\xbm^k + \sbm^{k-1} - \zbm^k \right ) = \frac{1}{\gamma} \sbm^k .
\end{equation}
    By using  the $L$-Lipschitz continuity of $\nabla h$ in Assumption~\ref{As:LipschitzPrior}, we have that
    \begin{equation}\label{Eq:LipofGradhexact}
         h\left(\xbm^k\right ) \leq h\left(\zbm^k\right ) + \nabla h\left(\zbm^k\right )^\Tsf \left(\xbm^k - \zbm^k\right  ) + \frac{L}{2}\left\|\xbm^k - \zbm^k\right\|_2^2.
    \end{equation}
    From equations~\eqref{Eq:optimofHexact},~\eqref{Eq:LipofGradhexact} and the fact that $\gamma L < 1$, we have
       \begin{align}\label{Eq:augmentedLagrang1}
        \nonumber\phi\left(\xbm^k, \zbm^k, \sbm^k\right  ) &= g\left(\xbm^k\right ) + h\left(\zbm^k\right ) + \frac{1}{\gamma}\left(\sbm^k\right )^\Tsf \left(\xbm^k - \zbm^k\right ) + \frac{1}{2\gamma} \left\|\xbm^k - \zbm^k\right\|_2^2\\
        \nonumber&  > g\left(\xbm^k\right ) + h\left(\zbm^k\right ) + \nabla h\left(\zbm^k\right )^\Tsf  \left(\xbm^k - \zbm^k\right ) + \frac{L}{2} \left\|\xbm^k - \zbm^k\right\|_2^2\\
        \nonumber &  > g\left(\xbm^k\right ) +h\left(\xbm^k\right ).
    \end{align}
    Note that since both functions $g$ and $h$ are bounded from below from Assumption \ref{As:BoundedFromBelow} , we conclude that the augmented   Lagrangian is bounded from below. This implies that there exists $\phi^\ast = \phi(\xbm^\ast, \zbm^\ast, \sbm^\ast) > -\infty$ such that we have  $\phi^\ast \leq \phi(\xbm^k, \zbm^k, \sbm^k)$, for all $k \geq 1$.  
\end{proof}

\section{Background material}\label{sup:sec:addtionaltheory}
\subsection{MMSE denoising as proximal operator}
\label{ssec:MMSEdenoiseProx}

The connection between MMSE estimation and regularized inversion was established by Gribonval in~\cite{Gribonval2011}, and this relationship has been explored in various contexts~\cite{Gribonval.Machart2013, Kazerouni.etal2013, Gribonval.Nikolova2021, gan2023block}. This connection was formally linked to Plug-and-Play (PnP) methods in~\cite{Xu.etal2020}, resulting in a novel interpretation of MMSE denoisers within the framework of PnP. In this section, we investigate the fundamental argument that bridges MMSE denoising and proximal operators.

\medskip\noindent
The MMSE estimator for the following AWGN denoising problem 
\begin{equation}\label{Eq:noisemodelApp}
    \ubm = \xbm +\ebm \quad \text{with} \quad \xbm \sim \widehat{p}_\xbm, \quad \ebm \sim \Ncal(0, \sigma^2 \Ibm),
\end{equation}
is expressed as 

\begin{equation}
\label{Eq:MMSEDenoiserApp}
\Dsf_{\sigma}(\ubm) \defn \E[\xbm | \ubm] = \int_{\R^{n}} \xbm p_{\xbm | \ubm} (\xbm | \ubm) \d \xbm.
\end{equation}

\medskip\noindent
From \emph{Tweedie's formula}, we can express the estimator~\eqref{Eq:MMSEDenoiserApp} as 
\begin{equation}
\label{Eq:Tweedie}
\Dsf_{\sigma}(\ubm) = \ubm - \sigma^2 \nabla h_{\sigma}(\ubm) \quad\text{with}\quad h_{\sigma}(\ubm) = -\log(p_{\ubm}(\ubm)),
\end{equation}
which is derived by differentiating~\eqref{Eq:MMSEDenoiserApp} using the expression for the probability distribution given by
\begin{equation}
\label{Eq:ConvRel}
p_{\ubm}(\ubm) = (p_{\xbm} \ast \phi_{\sigma})(\ubm) = \int_{\R^{n}} \phi_{\sigma}(\ubm-\xbm)p_{\xbm}(\xbm) \d \xbm,
\end{equation}
where
$$\phi_{\sigma}(\xbm) \defn \frac{1}{(2\pi\sigma^2)^{\frac{n}{2}}} \exp\left(-\frac{\|\xbm\|^2}{2\sigma^2}\right).$$
Since $\phi_{\sigma}$ is infinitely differentiable, the same applies to $p_{\ubm}$ and $\Dsf_{\sigma}$.  As demonstrated in Lemma 2 of~\cite{Gribonval2011}, the Jacobian of $\Dsf_{\sigma}$ is positive definite:
\begin{equation}
\label{Eq:Jacobian}
\Jsf\Dsf_{\sigma}(\ubm) = \Ibf - {\sigma}^2 \Hsf h_{\sigma}(\ubm) \succ 0, \quad \ubm \in \R^{n},
\end{equation}
where $\Hsf h_{\sigma}$ represents the Hessian matrix of the function $h_{\sigma}$. Additionally, Assumption~\ref{As:NonDegenerate} implies that $\Dsf_{\sigma}$ is a \emph{one-to-one} mapping from $\R^{n}$ to $\Im(\Dsf_{\sigma})$. This implies that ${(\Dsf_{\sigma})^{-1}: \Im(\Dsf_{\sigma}) \rightarrow \R^{n}}$ is well defined and infinitely differentiable over $\Im(\Dsf_{\sigma})$, as outlined in Lemma 1 of~\cite{Gribonval2011}. Consequently, this indicates that the regularizer $h$ in~\eqref{Eq:ExpReg} is also infinitely differentiable for any $\xbm \in \Im(\Dsf_{\sigma})$.

\medskip\noindent
We will now establish that
\begin{align}
\label{Eq:DenoiserIsProx}
\Dsf_{\sigma}(\ubm) &= \prox_{\gamma h}(\ubm) = \argmin_{\xbm \in \R^{n}}\left\{\frac{1}{2}\|\xbm-\ubm\|^2 + \gamma h(\xbm)\right\}\nonumber
\end{align}
where $h$ is a (possibly nonconvex) function defined in~\eqref{Eq:ExpReg}. Our objective is to demonstrate that $\ybm^\ast = \ubm$ is the unique stationary point and global minimizer of
$$
\varphi(\ybm) \defn \frac{1}{2}\|\Dsf_{\sigma}(\ybm)-\ubm\|^2 + \gamma h(\Dsf_{\sigma}(\ybm)), \quad \ybm \in \R^{n}.
$$
By using the definition of $h$ in~\eqref{Eq:ExpReg} and the Tweedie's formula~\eqref{Eq:Tweedie}, we obtain
\begin{align*}
\varphi(\ybm) 
= \frac{1}{2}\|\Dsf_{\sigma}^\ast(\ybm)-\ubm\|^2 - \frac{\sigma^4}{2}\|\nabla h_{\sigma}(\ybm)\|^2 + \sigma^2h_{\sigma}(\ybm).
\end{align*}
The gradient of $\varphi$ is then given by
\begin{align*}
&\nabla \varphi(\ybm) 
= [\Jsf\Dsf_{\sigma}(\ybm)](\Dsf_{\sigma}(\ybm)-\ubm) + \sigma^2 [\Ibf - \sigma^2 \Hsf h_{\sigma}(\ybm)]\nabla h_{\sigma}(\ybm) = [\Jsf \Dsf_{\sigma}(\ybm)](\ybm-\ubm),
\end{align*}
where we used~\eqref{Eq:Jacobian} in the second line and~\eqref{Eq:Tweedie} in the third line. Consider a scalar function  ${q(\nu) = \varphi(\ubm+\nu\ybm)}$ and its derivative
$$q'(\nu) = \nabla \varphi(\ubm+\nu\ybm)^\Tsf\ybm = \nu \ybm^\Tsf [\Jsf\Dsf_{\sigma}^\ast(\ubm+\nu\ybm)]\ybm.$$
The positive definiteness of the Jacobian~\eqref{Eq:Jacobian} implies that $q'(\nu) < 0$ and $q'(\nu) > 0$ for $\nu < 0$ and $\nu > 0$. Thus, $\nu = 0$ is the global minimizer of $q$. Since $\ybm \in \R^{n}$ is arbitrary, we can conclude that $\varphi$ has no stationary point other than $\ybm^\ast = \ubm$, and that $\varphi(\ubm) < \varphi(\ybm)$ for any $\ybm \neq \ubm$~\cite{Xu.etal2020}.

\section{On the assumptions of Theorem~\ref{Thm:MainThm}} \label{sup:sec:assumpList}

In this section, we present the list of assumptions required for Theorems~\ref{Thm:MainThm}. Assumptions required for Theorems are typically employed when using MMSE estimators as PnP priors, engaging in nonconvex optimization, or dealing with mismatched/inexact PnP priors. 

\textbf{Assumptions of Theorem~\ref{Thm:MainThm}:}
\begin{itemize}
    \item \emph{Prior distributions $p_{\xbm}$ and  $\widehat{p}_{\xbm}$, denoted as target and mismatched distributions are   non-degenerate over $\R^n$. } 
    
    As discussed in Section~\ref{subsec:theory}, this assumption is commonly adopted  to establish a relation between regularized inversion and MMSE estimation~\cite{Gribonval2011, Gribonval.Machart2013, Kazerouni.etal2013}. The MMSE estimators have been previously used as priors in PnP methods~\cite{Xu.etal2020, gan2023block, Laumont.etal2022}.
    
    \item \emph{ Function $g$ (data-fidelity term) is continuously differentiable.}
    
    This assumption is an standard assumption commonly adopted in nonconvex optimization, specifically in the context of inverse problems~\cite{li2018simple, jiang2019structured,yashtini2021multi}. It is worth noting that the majority of well-established data-fidelity terms for image restoration tasks fall under the umbrella of this assumption. Importantly, this framework does not necessitate the convexity of data-fidelity terms, making it versatile for handling non-linear measurement models. Furthermore, our result can be extended to a non-differentiable data-fidelity term $g$ by using subdifferentials, making it applicable to applications like phase retrieval~\cite{Metzler.etal2018}.
    
    \item \emph{The explicit data-fidelity term $g$ and the implicit regularizer $h$ are bounded from below.} 
    
    This assumption is standard in optimization and ensures that the optimization problem is well-posed and has a meaning full solution. This Assumption is commonly adopted in optimization algorithms~\cite{yashtini2021multi, hurault2022proximal,Hurault.etal2022,Xu.etal2020}.
    
    \item \emph{The denoisers $\Dsf_\sigma$ and $\Dhat_\sigma$ have the same range $\Im(\Dsf_\sigma)$.
    Additionally, functions $h$ and $\hine$ associated with $\Dsf_\sigma$ and $\Dhat_\sigma$, are continuously differentiable with $L$-Lipschitz continuous gradients over $\Im(\Dsf_\sigma).$}

    For the image denoisers that share the same architecture and employ the same loss function, it is reasonable to assume that their output range would be consistent, given that it aligns with the range of natural color images.
    Furthermore, due to the smoothness properties of both $\Dsf_\sigma^{-1}$ and $h_\sigma$ as described in equation~\ref{Eq:ExpReg}, it follows that the function $h$ is also smooth and continuously differentiable. A similar property holds for the function $\hine$ corresponding to the mismatched denoiser $\Dhat_\sigma$. Consequently, this assumption is a mild requirement, only necessitating that regularization functions have $L$-Lipschitz continuous gradients over their shared range. While the assumption of Lipschitz continuous gradients is a standard one in nonconvex optimization, it is typically enforced over the entire space $\R^n$, whereas here, we specifically enforce it over the range of the denoisers.~\cite{Hurault.etal2022, yashtini2021multi}. 

    \item \emph{The distance between the target and mismatched denoisers are bounded at each iteration of the algorithm.} 
    
    This assumption bounds the distance between the mismatched and target denoisers, which serves as a measure of the distribution shift. As the distributions used to train the mismatched denoisers diverge from the target distribution, we anticipate the bound on denoisers' distance will also increase. This assumption is a common one in the context of dealing with approximate, inexact, or mismatched priors~\cite{Laumont.etal2022, shoushtari2022deep, gan2023block}.

    \item \emph{The distance of sequence $(\zbm^k)$ given by the Algorithm~\ref{Alg:InexactPnPADMM} to stationary point $\zbm^*$ is bounded by a constant.}
    
    As depicted in Algorithm~\ref{Alg:InexactPnPADMM}, sequence $\zbm^k$ is the output of mismatched denoiser at each iteration. Since many denoisers have bounded range spaces, the existence of bound $R$ often holds. Specifically, this is true for such image denoisers whose output live within the bounded subset $[0,255]^n\subset \R^n$ or $[0,1]^n\subset \R^n$~\cite{Sun.etal2021, Sun.etal2019b}.
\end{itemize}

\section{Additional Technical Details}\label{sup:sec:additionalresults}
We present here some technical details and results that were not included in the main paper. In our quantitative comparisons of different priors, we utilized the Peak Signal-to-Noise Ratio (PSNR) metric, which is defined as follows:
\begin{equation*}
    PSNR(\xbmhat, \xbm) = 20\log_{10}\left ( \frac{1}{\|\xbmhat- \xbm\|_2}\right ), 
\end{equation*}
where $\xbm$ represents the ground truth and $\xbmhat$ denotes the estimated image. For our PnP-ADMM algorithm, we performed 15 iterations for all denoisers. The denoisers were trained using the DRUNet architecture~\cite{Zhang.etal2021dpir} with Mean Squared Error (MSE) loss, employing the Adam optimizer~\cite{Kingma.Ba2015} with a learning rate of $10^{-4}$. We incorporated a noise level map strength that decreases logarithmically from $\sigma_{\text{optim}}$ to $\sigma = 0.01$ over 15 iterations, where $\sigma_{\text{optim}}$ is fine-tuned for optimal performance for each test image and prior individually.
To prepare the training and testing images from datasets such as MetFaces~\cite{karras2020training}, AFHQ~\cite{choi2020stargan}, CelebA~\cite{liu2015faceattributes}, and RxRx1~\cite{sypetkowski2023rxrx1}, we randomly selected 1000 images and resized them to $256\times256$ slices. For the BreCaHAD~\cite{aksac2019brecahad} dataset, we cropped the images to $512\times512$ and subsequently resized them to $256\times256$ slices for both the training and testing datasets.

\medskip\noindent
Figure~\ref{fig:gt} shows the images that were used to generate measurements for super-resolution task. 
\begin{figure}[!t]
    \centering
    \includegraphics[width=1\textwidth]{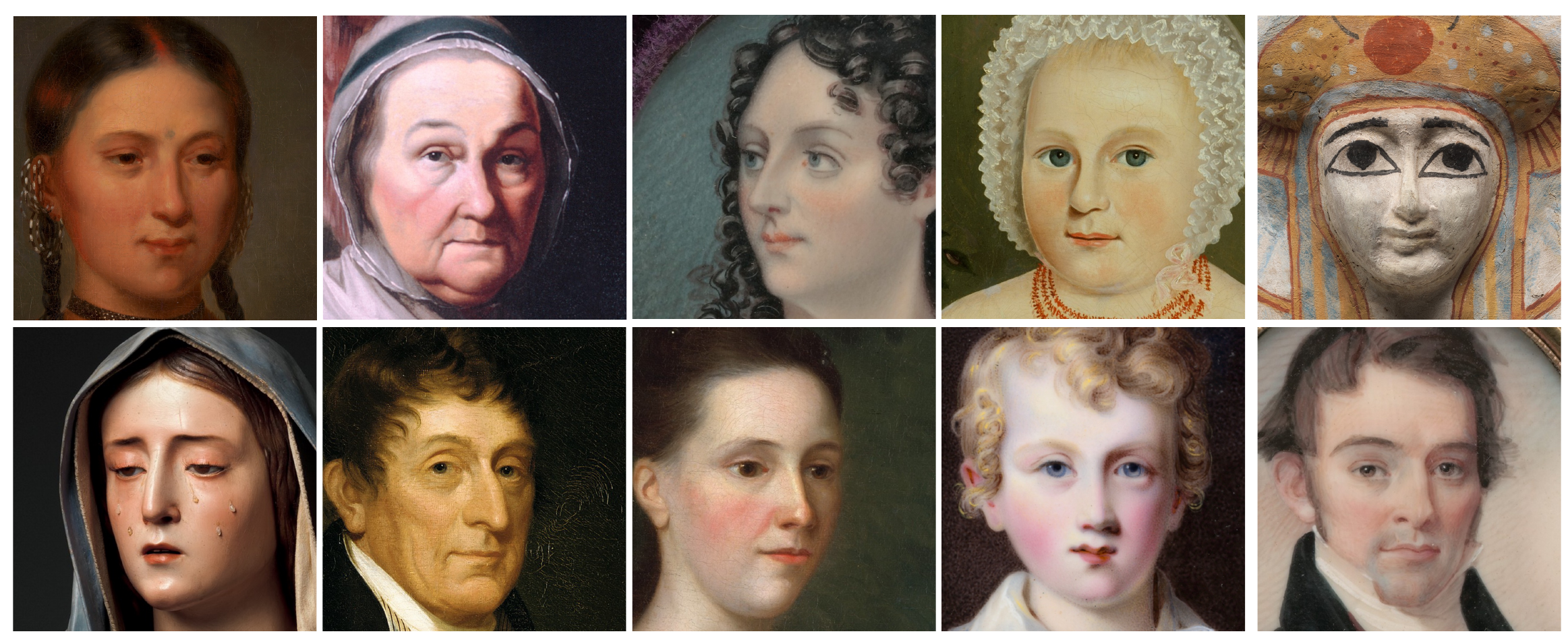}
    \caption{~\emph{\small Ground truth images from MetFaces dataset used for generating measurements.  }}
    \label{fig:gt}
\end{figure}

\section{Additional experiments}
\subsection{Super-resolution}
We present additional image super-resolution results for a more comprehensive understanding. 
Figure~\ref{fig:dnvsSR} illustrates the performance comparison of denoising and super-resolution using different priors. On the left side of Figure~\ref{fig:dnvsSR}, the denoising performance of target (trained on MetFaces), mismatched (trained on BreCaHAD), adapted, and retrained priors is displayed. Meanwhile, on the right side, the reconstruction performance of target, mismatched, and adapted priors is presented. Note the improvement achieved by using adapted priors in both denoising and super-resolution tasks.

\begin{figure}[!t]
    \centering
    \includegraphics[width=1\textwidth]{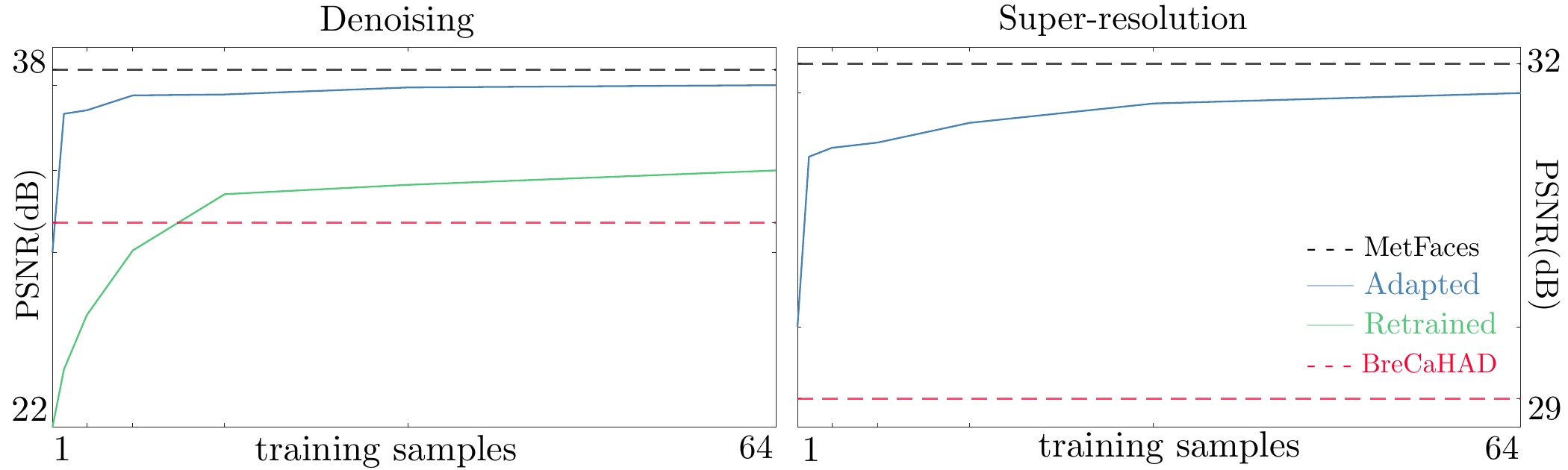}
    \caption{~\emph{\small
    The Left figure compares the empirical results of denoising for retrained and adapted priors vs. the number of training samples, as well as target (MetFaces) and mismatched (BreCaHAD) denoisers. The right figure compares PnP performance using target, mismatched, and adapted priors on super-resolution task. The results in both figures are reported for the test set from the MetFaces dataset,  averaged for scaling factor of $s=4$. It's worth highlighting the noticeable performance improvement of denoisers achieved through domain adaptation. Additionally, observe the relationship between PnP performance and  adapted denoiser performance.}}
    \label{fig:dnvsSR}
\end{figure}

\subsection{Deblurring}
We present additional visual results for deblurring image restoration.  Figure~\ref{fig:deblurrmismatch} presents a visual comparison of a test image from the MetFaces dataset using the target denoiser and four different mismatched denoisers. The images are convolved with the indicated blur kernel and subjected to Gaussian noise with a noise level of $v = 0.01$. Note the suboptimal performance of mismatched priors in the deblurring task. As it is evident in Figure~\ref{fig:deblurrmismatch}, the discrepancy between the mismatched distributions directly affects the PnP performance. Figure~\ref{fig:deblurupdate} illustrates a visual comparison of adapted priors in the deblurring task. 

\begin{figure}[!t]
    \centering
    \includegraphics[width=1\textwidth]{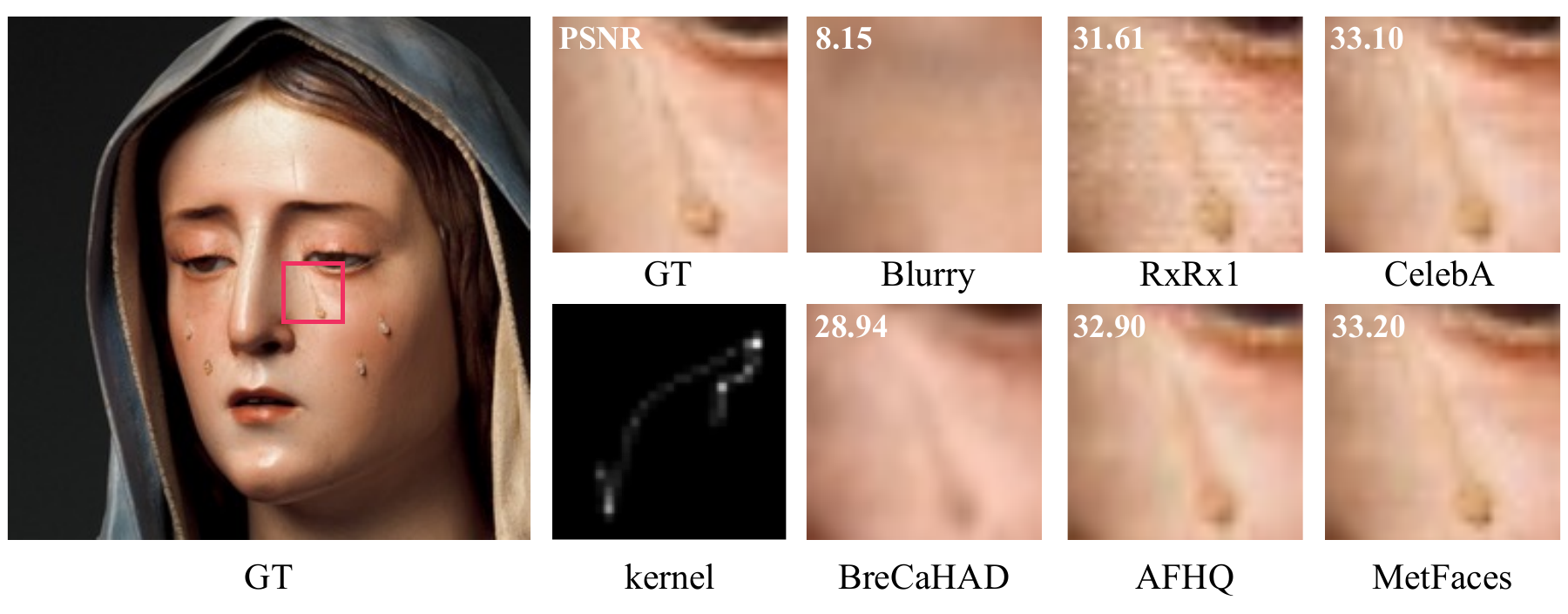}
    \caption{~\emph{\small Visual comparison of various mismatched denoisers for deblurring on an image from MetFaces dataset. The performance is reported in terms of PSNR (dB).
     The image is convolved with the indicated blur kernel and Gaussian noise with $v = 0.01$ is added. Note that regardless of the PnP image restoration task, the discrepancies in training distributions result in mismatched priors and suboptimal performance in PnP. }}
    \label{fig:deblurrmismatch}
\end{figure}

\begin{figure}[t]
    \centering
    \includegraphics[width=1\textwidth]{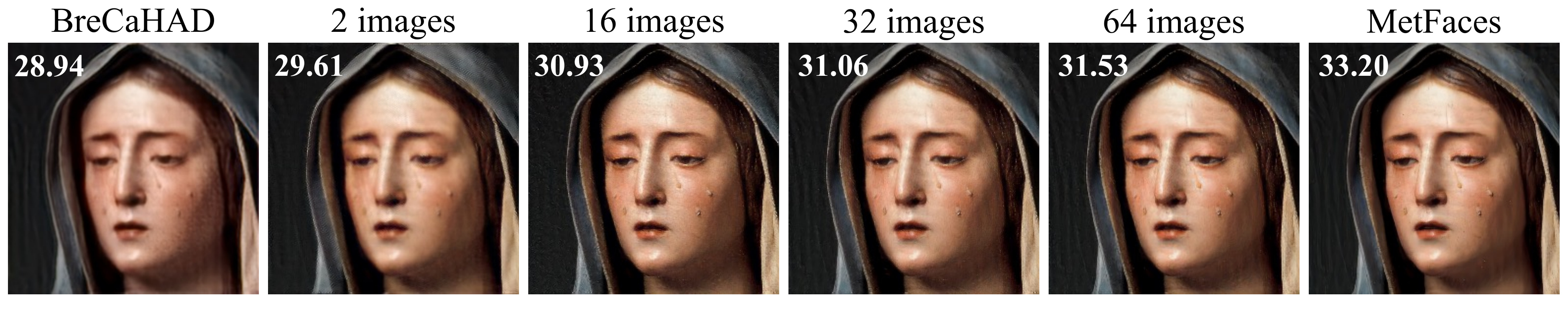}
    \caption{~\emph{\small Visual comparison of several adapted prior for image deblurring on a test image from MetFaces dataset. The performance is reported in terms of PSNR (dB). The experiment setting is similar to that of Figure~\ref{fig:deblurrmismatch}. Note how adapting the mismatched prior with a larger set of data from the target distribution results in a better performance in PnP. }}
    \label{fig:deblurupdate}
\end{figure}

\subsection{Various Distributions Experiment}

\begin{figure}[t]
    \centering
    \includegraphics[width=1\textwidth]{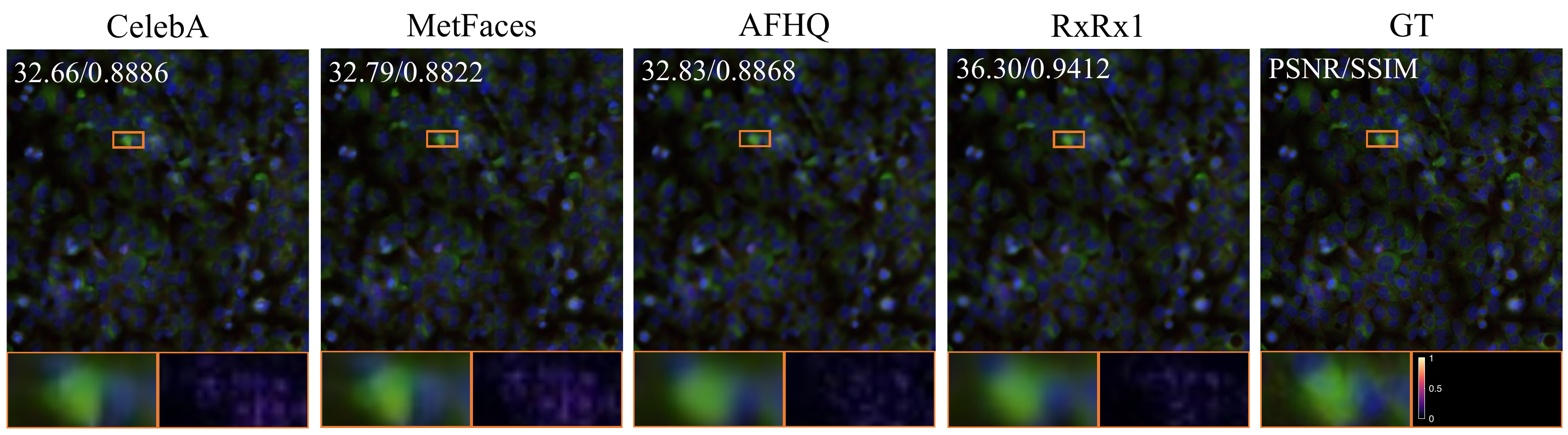}
    \caption{~\emph{\small Visual evaluation of several priors on the  image super-resolution task reported in terms of PSNR (dB) and SSIM for an image from RxRx1. Images are downsampled with the scale of $s=4$ and convolved with the indicated blur kernel in Figure~\ref{fig:SR_mismatched}. Note the influence of mismatched priors on the performance of PnP.  }}
    \label{fig:rxmm}
\end{figure}
We present additional visual results for mismatched priors and domain adaptation using various distributions for image super-resolution. In the following Figures, we  demonstrate the effect of mismatched priors and prior adaptation tested on an images from RxRx1~\cite{sypetkowski2023rxrx1} dataset.  Figure~\ref{fig:rxmm} presents a visual comparison for  PnP on super-resolution task using the target and three mismatched priors on an image from the RxRx1 test set. The images are convolved with the blur kernel indicted in Figure~\ref{fig:SR_mismatched}. Figure~\ref{fig:rxup} illustrates visual results for domain adaptation of mismatched prior trained on CelebA dataset and adapted to RxRx1 distribution. Note the improvement in PnP performance by using adapted priors. Also, note the relation between PnP performance and the number of samples from the target distribution used for adaptation.

\begin{figure}[h]
    \centering
    \includegraphics[width=1\textwidth]{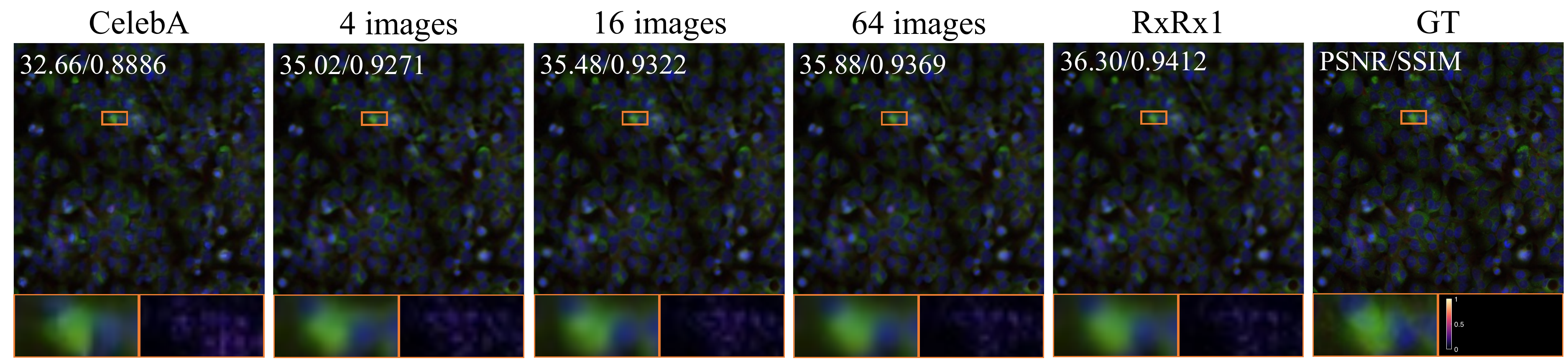}
    \caption{~\emph{\small Visual comparison of image super-resolution with target (RxRx1), mismatched (CelebA), and adapted priors on a test image from RxRx1. The images are downsampled by the scale of $s=4$. The performance is reported in terms of PSNR (dB) and SSIM. Note how the recovery performance increases by adaptation of mismatched priors to a larger set of images from the target distribution.}}
    \label{fig:rxup}
\end{figure}

\end{document}